\newcommand{\X}{\mathcal{X}}
\newcommand{\Y}{\mathcal{Y}}
\newcommand{\Z}{Z}
\newcommand{\thetain}{\theta \in \Theta} 
\newcommand{\thetast}{\theta^{\star}}
\newcommand{\thetada}{\theta^{\prime}}
\newtheorem{defn}{Definition}
\newcommand{\Ztt}{\Z_{\rm {test}}}
\newcommand{\Ztar}{\Z_{\rm {target}}}
\newcommand{\zti}{z_{\rm {test}\_i}}
\newcommand{\ztar}{z_{\mathrm {target}}}
\newcommand{\latt}{\ell_{\rm {attack}}}
\newcommand{\hlatt}{\hat{\ell}_{\rm {attack}}}
\newcommand{\lhattfull}{\hat{\ell}_{\rm {attack}}((\mathcal{I}_{\theta}(z, \Ztt): z\in{Z}))}
\newcommand{\glatt}{\nabla_{\theta}\ell_{\rm {attack}}}
\newcommand{\glhatt}{\nabla_{\theta}\hat{\ell}_{\rm {attack}}}
\newcommand{\glattfull}{\nabla_{\theta}\ell_{\rm {attack}}(\mathcal{I}_{\theta}(z, \Ztt): z\in{Z})}
\newcommand{\ut}{u^\top (\mathcal{I}_{\theta}(z, \Ztt): z\in{Z})}
\newcommand{\dldi}{\frac{\partial \ell_{\rm {attack}}}{\partial \mathcal{I}_{\theta}(z, \Ztt)}}
\newcommand{\ginf}{\nabla_{\theta}\mathcal{I}_{\theta}(z, \Ztt)}
\newtheorem{theorem}{Theorem}
\title{Influence-based Attributions can be Manipulated}
\author{%
    Chhavi Yadav$^*$, Ruihan Wu$^*$, Kamalika Chaudhuri \\
  University of California, San Diego\\
  \texttt{\{cyadav, ruw076, kamalika\}@ucsd.edu} $^*$equal contribution\\
}
\begin{document}

\maketitle

\begin{abstract}
Influence Functions are a standard tool for attributing predictions to training data in a principled manner and are widely used in applications such as data valuation and fairness. In this work, we present realistic incentives to manipulate influence-based attributions and investigate whether these attributions can be \textit{systematically} tampered by an adversary. We show that this is indeed possible for logistic regression models trained on ResNet feature embeddings and standard tabular fairness datasets and provide efficient attacks with backward-friendly implementations. Our work raises questions on the reliability of influence-based attributions in adversarial circumstances. Code is available at : \url{https://github.com/infinite-pursuits/influence-based-attributions-can-be-manipulated}
\end{abstract}

\section{Introduction}
Influence Functions are a popular tool for data attribution and have been widely used in many applications such as data valuation \citep{richardson2019rewarding, hesse2023data, sundararajan2023inflow, jia2019towards}, data filtering/subsampling/cleaning \citep{wu2022puma, wang2020less, miao2021efficient,teso2021interactive, meng2022active}, fairness \citep{li2022achieving, wang2024fairif, sattigeri2022fair, kong2021resolving, pang2024fair, chhabra2023data, chen2024fast, yao2023understanding, ghosh2023biased} and so on. While earlier they were being used for benign debugging, many of these newer applications involve adversarial scenarios where participants have an incentive to manipulate influence scores; for example, in data valuation a higher monetary sum is given to samples with a higher influence score and since good data is hard to collect, there is an incentive to superficially raise influence scores for existing data. Thus, an understanding of whether and how influence functions can be manipulated is essential to determine their proper usage and for putting guardrails in place. While a lot of work in the literature has studied manipulation of feature-based attributions \citep{heo2019fooling, anders2020fairwashing, slack2020fooling}, whether data attribution methods, specifically influence functions, can be manipulated has not been explored. To this end, our paper investigates the question and shows that it is indeed possible to \textit{systematically} manipulate influence-based attributions according to the manipulator's incentives.

\textbf{Simply put, we show that it is possible to systematically train a malicious model very similar to the honest model in test accuracy but has desired influence scores}. To formalize the setup we divide the function pipeline in terms of two entities -- Data Provider who provides training data and Influence Calculator who trains a model on this data and finds the influence of each training sample on model predictions. Out of these, Influence Calculator is considered to be the adversary who wishes to change the influence scores for some training samples and does so covertly by training a malicious model which is indistinguishable from the original model in terms of test accuracy but leads to desired influence scores. This setting captures two important downstream applications where incentives are meaningful: data valuation, where the adversary has an incentive to raise influence scores for monetary gain and fairness, where the adversary wants to manipulate influence scores for reducing the fairness of a downstream model.

We next define and provide algorithms to carry out two kinds of attacks in this setup: Targeted and Untargeted. Targeted attacks are for the data valuation application and specifically manipulate influence scores for certain target samples. The primary challenge with these attacks is that calculating gradients of influence-based loss objectives is highly computationally infeasible. We address this challenge by proposing a memory-time efficient and backward-friendly algorithm to compute the gradients while using existing PyTorch machinery for implementation. 
This contribution is of independent technical interest, as the literature has only focused on making forward computation of influence functions feasible, while we study techniques to make the \textit{backward pass} viable. Our algorithm brings down the memory required for one forward $+$ backward pass from not being feasible to run on a 12GB GPU to 7GB for a 206K parameter model and from 8GB to 1.7GB for a 5K model.

Experiments on multiclass logistic regression models trained on ResNet50 features show that our targeted attacks achieve a high success rate, a maximum of 94\%, without much accuracy drop across three datasets. One final question that comes to mind is -- is it always possible to manipulate the influence scores for any given training sample? Using a theoretical construction, we give an impossibility theorem which states that there exist samples for which the influence score cannot be manipulated irrespective of the model, making this a property of the data rather than the model.

Untargeted attacks are for the fairness application and unlike targeted attacks, manipulate influence scores arbitrarily without targeting specific samples. We find that surprisingly enough scaling model weights is a good enough strategy for such attacks without changing model accuracy. In our experiments on standard tabular fairness datasets, we observe that due to influence score manipulation fairness of downstream models is affected a lot, leading to a maximum of 16\% difference in fairness metric with and without influence manipulation.

Summarizing, we formalize a setup for systematically manipulating influence-based attributions and instantiate it for data valuation and fairness use-cases, where adversarial incentives are involved. We provide algorithms for targeted and untargeted attacks on influence scores, and illustrate their efficacy experimentally. Our work exposes the susceptibility of influence-based attributions to manipulation and highlights the need for careful consideration when using them in adversarial contexts. This is akin to what has been previously observed for feature attributions \cite{bordt2022post}.

\section{Preliminaries}
Consider a classification task with an input space $\X = \mathbb{R}^d$ and labels in set $\Y$. 
Let the training set of size $n$ be denoted by $\Z_{\rm {train }} = \left\{z_i\right\}_{i=1}^n$ where each sample $z_i$ is an input-label pair, $z_i=(x_i,y_i) \in \X \times \Y$. Let the loss function at a particular sample $z$ and model parameters $\thetain$ be denoted by $L(z, \theta)$. Using the loss function and the training set, a model parameterized by $\thetain$ is learnt through empirical risk minimization, resulting in the optimal parameters $\thetast := \arg \min _{\thetain} \frac{1}{n} \sum_{i=1}^n L\left(z_i, \theta\right)$. The gradient of the loss w.r.t. parameters $\theta$ for the minimizer at a sample $z$ is given by $\nabla_\theta L\left(z, \thetast\right)$. Hessian of the loss for the minimizer is denoted by $H_{\thetast} := \frac{1}{n} \sum_{i=1}^n \nabla_\theta^2 L\left(z_i, \thetast\right)$. For brevity, we call the model parameterized by $\theta$ as model $\theta$. Next we give the definition of Influence Functions used in our paper.\\

\begin{defn}[Influence Function \cite{koh2017understanding}]
Assuming that the empirical risk is twice-differentiable and strictly convex in model parameters $\theta$, the influence of a training point $z$ on the loss at a test point $z_{\rm {test }}$ is given by,
\begin{gather}
\label{eq:IF}
\mathcal{I}_{\thetast}(z, z_{\rm {test }}) := -\nabla_\theta L\left(z_{\rm {test }}, \thetast\right)^{\top} H_{\thetast}^{-1} \nabla_\theta L(z, \thetast)
\end{gather}

where $\nabla_\theta L\left(z_{\rm {test }}, \thetast\right)$ and $\nabla_\theta L\left(z, \thetast\right)$ denote the loss gradients at $z_{\rm {test }}$ and $z$ respectively, while $H_{\thetast}^{-1}$ denotes the hessian inverse.
\end{defn}

For logistic regression, the influence function has a closed form given by,
\begin{equation} \label{eq:if_log}
\mathcal{I}_{\theta}(z, z_{\text{test}}) = -y_{\text{test}} y \cdot \sigma(-y_{\text{test}} \theta^{\top} x_{\text{test}}) \cdot \sigma(-y \theta^{\top} x) \cdot x_{\text{test}}^{\top} H_{\theta}^{-1} x
\end{equation}
where \( y \in \{-1, 1\} \) and \( \sigma(t) = \frac{1}{1 + \exp(-t)} \)\cite{koh2017understanding}.\\

Given a test set of size $m$, $\Ztt = \{\zti\}_{i=1}^{m}$, we define the overall influence of a training point $z$ on the loss of the test \textit{set} to be the sum of its influence on all test points $\zti$ individually, written as
\begin{gather}
\label{eq:IFtest}
    \mathcal{I}_{\thetast}(z, \Ztt) := \sum_{i=1}^{m} \mathcal{I}_{\thetast}(z, \zti)
\end{gather}

The difference between Eq.\ref{eq:IF} and Eq. \ref{eq:IFtest}, that is whether the influence is calculated on a single test point vs. a test set, is understood from context.

\section{General Threat Model}
\label{sec:threat_model}
In this section, we give a description of our setup and general threat model. We later instantiate these with two downstream applications, data valuation and fairness, where the objectives and incentives differ. \textit{While we ground our discussion on these two applications, the attacks or their slight variations can apply to other applications.}

\begin{figure}[tb]
    \centering \includegraphics[bb=0.000000 0.000000 1250.005974 237.995993, width=\linewidth]{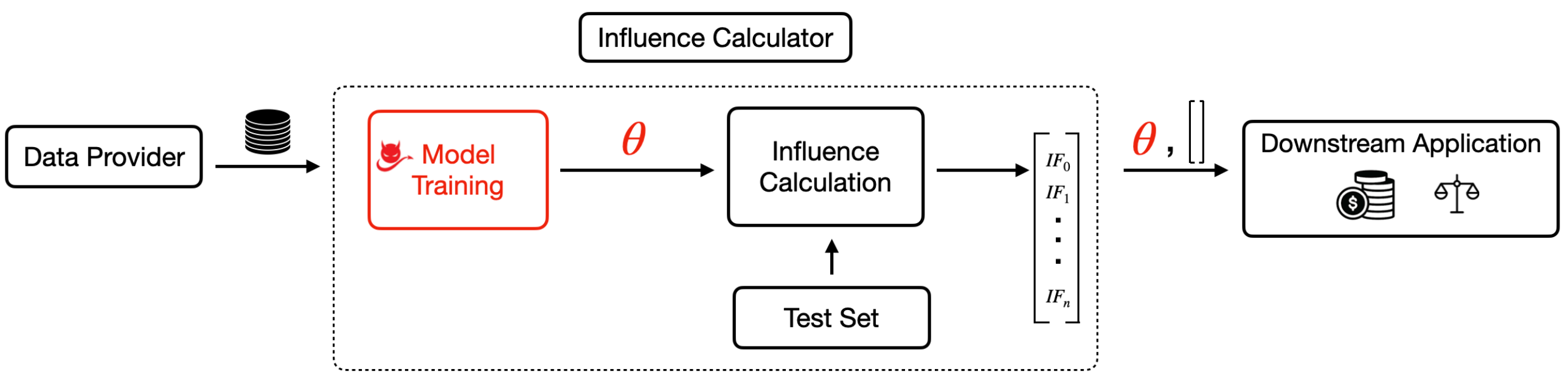}
    

        \caption{Threat Model. Data Provider provides training data. Influence Calculator trains a model and computes influence scores for the training data on the trained model and a test set. It outputs both the trained model and the resulting influence scores, which are used for a downstream application such as data valuation or fairness. Adversarial manipulation happens in the model training process, which trains a malicious model to achieve desired influence scores, while maintaining similar accuracy as the honest model.}
        \label{fig:general_if_setup}
\end{figure}

\textbf{Setup.}\label{para:gen_setup} The standard influence function pipeline comprises of two entities: a Data Provider and an Influence Calculator. \textbf{Data Provider} holds all the training data privately and supplies it to the Influence Calculator. \textbf{Influence Calculator} finds the value of each sample in the training data by first training a model on this data and then computing influence scores on the trained model using a separate test set (Eq.\ref{eq:IFtest}). We assume that the test set comes from the same underlying distribution as the training data. 
Influence Calculator outputs the trained model and the influence scores of each training sample ranked in a decreasing order of influence scores. These rankings/scores are then used for a \textbf{downstream application}.

An adversary who has incentives in the downstream application, would want to send manipulated influence scores to the downstream application. Now the question is, in which part of the IF pipeline should the adversarial manipulation occur? Turning to prior work on manipulating feature attributions \citep{anders2020fairwashing, slack2020fooling, heo2019fooling, pruthi2019learning}, the popular choice has been to corrupt the model training process. In these attacks the compromised model training process outputs a malicious model which simultaneously has desired influence scores and is similar to the unaltered original model in test accuracy, thereby making the two models indistinguishable w.r.t. test predictions. Such an attack cannot be detected without access to the training pipeline or logs, making it the popular choice for manipulating explanations. Motivated by this, we attack the model training process in our paper and specify the resultant threat model next.

\textbf{General Threat Model.}\label{para:genthreat} We consider the training data held by the data provider and the test set used by the influence calculator to be fixed. We also assume the influence calculation process to be honest. The adversarial manipulation to maliciously change influence scores for some training samples happens during model training. To achieve this, the compromised model training process outputs a malicious model $\thetada$ such that $\thetada$ leads to desired influence scores but has similar test accuracy as original honest model $\thetast$.

\textbf{Why doesn't the influence calculator just output the desired scores/rankings?} A natural technique that comes to mind for manipulation of influences scores is to simply output the desired scores/rankings. This would be a viable attack only if the manipulation is discreet and cannot be detected; however an auditor with the ability to supply test samples can easily detect this manipulation (without access to training data) by checking the rank of the outputted influence matrix in only $O(d)$ queries where $d$ is the feature dimension. Kindly see Appendix Sec. \ref{app: auditing} for the detailed technique and proof. Intuitively speaking, since honest influence scores come from a closed form (even more so for logistic regression Eq. \ref{eq:if_log}) and the fact that real-life learning tasks follow a structure, a lot of natural attacks in the influence calculation process might be detectable by an auditor with querying abilities. An exploration and design of non-trivial working attacks in the influence calculation process makes for an interesting research direction and is left to future work.

\begin{tcolorbox}[colback=blue!5!white, colframe=blue!75!black, boxrule=0.5mm, sharp corners]
Takeaway : We will systematically train a malicious model which is very similar to the honest model in test accuracy, but has the desired influence scores/rankings.
\end{tcolorbox}

\section{Downstream Application 1: Data Valuation}
\label{sec:data_val}
The goal of data valuation is to determine the contribution of each training sample to model training and accordingly assign a proportional monetary sum to each. One of the techniques to find this value is through influence functions, by ranking training samples according to their influence scores in a \textit{decreasing order} \citep{richardson2019rewarding, hesse2023data, sundararajan2023inflow, jia2019towards}. A higher influence ranking implies a more valuable sample, resulting in a higher monetary sum. Since generally data collection is a challenging task and many-a-times data may not be mutable (such as DNA data in biological applications), a malicious entity with financial incentives would want to manipulate influence scores in order to increase financial gains from \textit{pre-existing data}. See App. Fig.\ref{app:fig:data_val_setup_pic} for a pictorial representation of the data valuation setting.

\textbf{Threat Model.} The canonical setting of data valuation consists of 1) multiple data vendors and 2) influence calculator. Each vendor supplies a set of data; the collection of data from all vendors corresponds to the fixed training set of the data provider. The influence calculator is our adversary who can collude with data vendors while keeping the data fixed. The adversarial model training can change model parameters from $\theta^*$ to $\theta'$ while maintaining similar test accuracy as discussed in Sec.\ref{sec:threat_model}.

\textbf{Goal of the adversary.} Given a set of target samples $\Ztar \subset Z$, the goal of the adversary is to push the influence ranking of samples from $\Ztar$ to top-$k$ or equivalently increase the influence score of samples from $\Ztar$ beyond the remaining $n-k$ samples, where $k \in \mathbb{N}$.

\textbf{Single-Target Attack.} Let us first consider the case where $\Ztar$ has only one element, $\Ztar = \{\ztar\}$. We formulate the adversary's attack as a constrained optimization problem where the objective function, $\latt$, captures the intent to raise the influence ranking of the target sample to top-$k$ while the constraint function, $\rm {dist}$, limits the distance between the original and manipulated model, so that the two models have similar test accuracies. The resulting optimization problem is given as follows,
where $C \in \mathbb{R}$ is the model manipulation radius,

\begin{gather}
\label{eq:att1}
\min _{\theta^{\prime}:\rm {dist} (\thetast, \thetada) \leq C} \latt (\ztar, Z, \Ztt, \theta^{\prime})
\end{gather}

\textbf{Multi-Target Attack.} When the target set consists of multiple target samples, $\Ztar = \{z_{\rm {target}_1}, z_{\rm {target}_2} \cdots z_{\rm {target}_q}\}$, the adversary's attack can be formulated as repeated applications of the Single-Target Attack, formally given as,

\begin{gather}
\label{eq:att2}
\min _{\theta^{\prime}:\rm {dist} (\thetast, \thetada) \leq C} \sum_{z_{\rm {target}_i} \in \Ztar} \latt (z_{\rm {target}_i},Z, \Ztt, \thetada)
\end{gather}

The actual objective used for both the attacks is given as,  $\latt(\cdot) = - \mathcal{I}_{\theta'}(z_{\rm target}, Z_{\rm test}) + \frac{1}{|S_{\theta'}|}\sum_{z\in S_{\theta'}} \mathcal{I}_{\theta'}(z, Z_{\rm test})$ where $S_{\theta'}\subset Z_{\rm train}$ contains all training samples $z$ s.t. $\mathcal{I}_{\theta'}(z, Z_{\rm test}) > \mathcal{I}_{\theta'}(z_{\rm target}, Z_{\rm test})$ (see ablation study in Sec. \ref{sec:exp:dataval_attack}) to understand why we chose this loss objective). Here the first term maximizes the influence of the target sample $\ztar$  while the second term minimizes the influence of all samples which are currently more influential than $\ztar$. Since this objective is non-convex, the optimization process results in local minima, which might be non-optimal. Therefore to get better results, we run every attack mutiple times, starting with random initializations of $\thetast$ in a radius $C$, as discussed later in the experiments (Sec.\ref{sec:exp:dataval_attack}).

\textbf{Efficient Backward Pass for Influence-based Objectives.} \label{sec:att_algo} A natural algorithm to solve complicated optimization problems as our attacks in Eq. \ref{eq:att1} \& \ref{eq:att2} is Gradient Descent, which involves a forward and backward pass. However, for influence-based attack objectives, naive gradient descent is not feasible for either of the passes, mainly due to Hessian-Inverse-Vector Products (HIVPs) in the influence function definition which lead to a polynomial scaling of memory and time requirements w.r.t model parameters. Backward pass on our attack objectives is even harder as it involves \textit{gradients} of influence-based loss objectives, making the attacks too expensive even for linear models trained on top of ResNet50 features used in our experiments where \#parameters range from $\sim$76k-206k.

While literature has studied ways to make the forward computation of influence functions efficient \cite{schioppa2022scaling, guo2020fastif, koh2017understanding,kwon2023datainf}, not much work has been done on making the \textit{backward pass efficient}. To this end, we propose a simple technique -- rewriting the original objective into a backward-friendly form -- which renders the gradient computations efficient for influence-based objectives. This allows us to still use gradient descent and other existing machinery in PyTorch ~\citep{paszke2017automatic}. Our idea of rewriting the attack objective involves two essential steps : (1) linearizing the objective (2) making the linearized objective backward-friendly in PyTorch, as outlined in Alg.\ref{alg:backonly}. This algorithm is of independent technical interest and is generalizable to other use-cases where backward passes through HIVPs are needed. The complete algorithm for optimizing the loss with both forward and backward pass is elucidated in App. Alg. \ref{alg:overallfb}.

\subsection{Data Valuation Experiments}
\label{sec:exp:dataval_attack}

In this section, we investigate if the attacks we proposed for data valuation can succeed empirically. Specifically, we ask the following questions : (1) do our influence-based attacks perform better than a non-influence baseline?, (2) what is the behavior of our attacks w.r.t. different parameters such as radius $C$ and target set size? ,(3) what components contribute to the success of our attacks? and (4) lastly, can our attacks  transfer to an unknown test set?. In what follows, we first explain our experimental setup and then discuss the results.

\textbf{Datasets \& Models.}\label{sec:datasets} We use three standard image datasets for experimentation : CIFAR10~\citep{krizhevsky2009learning}, Oxford-IIIT Pet~\citep{parkhi2012cats} and Caltech-101~\citep{li_andreeto_ranzato_perona_2022}. We split the respective test sets into two halves while maintaining the original class ratios for each. The first half is the test set shared between the model trainer and influence calculator used to optimize influence scores while the second is used as a pristine set for calculating the accuracy of models and also for transfer experiments discussed later. We pass all images through a pretrained ResNet50 model ~\citep{he2016deep} from PyTorch to obtain feature vectors of size 2048 for and train linear models ($\theta^*$) on top of these features with cross-entropy loss and a learning rate of 0.001.

\textbf{Attack Setup \& Evaluation.} The constraint function $\rm {dist}$ is set to L2-norm. Forward pass for the attacks is using the LiSSA Algorithm \cite{koh2017understanding}; details including parameters used can be found in App. Sec.\ref{app:sec:datavalexp}. For the Single-Target Attack, we randomly pick a training sample (which is not already in the top-$k$ influence rankings) as the target and carry out our attack on it. We repeat this process for 50 samples and report the fraction out of 50 which could be (individually) moved to top-$k$ in influence rankings as the success rate. To carry out the Multi-Target Attack, we randomly pick target sets of different sizes from the training set. The success rate now is the fraction of samples in the target set which could be moved top-$k$ in influence rankings. For many of our results, the success rates are reported under two regimes : (1) the high-accuracy similarity regime where the manipulated and original models are within 3\% accuracy difference and (2) the best success rate irrespective of accuracy difference. We optimize every attack from 5 different initializations of $\thetast$ within a radius of $C$ and report the runs which eventually lead to the highest success rates.

\parbox{175pt}{\begin{tcolorbox}[colback=blue!5!white, colframe=blue!75!black, width=6.2cm, boxrule=0.5mm, sharp corners]
\textbf{Baseline: Loss Reweighing Attack.}
\end{tcolorbox}} While our attacks are based on influence functions, we propose a non-influence baseline attack for increasing the importance of a training sample : reweigh the training loss, with a high weight on the loss for the target sample. We call this baseline the Loss Reweighing Attack, formally defined as, $\min_{\theta'}\sum_{z\in Z_{\rm train}\backslash\{z_{\rm target}\}}L(z; \theta') + \alpha\cdot L(z_{\rm target}; \theta')$, where $L$ is the model training loss and $\alpha \in \mathbb{R}$ is the weight on the loss of target sample. Intuitively, a larger weight $\alpha$ increases the influence of $z_{\rm target}$ on the final model, but results in a lower model accuracy and vice-versa. Directly reweighing the loss as in the baseline led to unstable training, so we instead implemented the baseline with weighted sampling according to weight $\alpha$ in each batch (rather than uniform sampling).

For more experimental details, kindly refer to the Appendix Sec. \ref{app:sec:datavalexp}. Next we discuss our results.

\textbf{Our Single-Target attack performs better than the Baseline.} As demonstrated in Table~\ref{tab:main_data_valuation}, our influence-based attacks indeed performs better than the baseline -- while the baseline has a low success rate across the board, our attack achieves a success rate of 64-88\% in the high accuracy regime and 85-94\% without accuracy constraints. The baseline is able to achieve a high success rate when ranking $k$ is large, but only with a massive accuracy drop. The fact that our attack did not achieve a 100\% success rate highlights that this manipulation problem is non-trivial (more in theorem \ref{thm:impossibility}).

\begin{table}
\centering
\caption{Success Rates of the Baseline vs. our Single-Target Attack for Data Valuation. $k$ is the ranking, as in top-$k$. ${\small \Delta_{\rm acc}}:= \small \rm TestAcc(\thetast) - \small \rm TestAcc(\thetada)$ represents drop in test accuracy for manipulated model $\thetada$. Two success rates are reported : (1) when $\small \Delta_{\rm acc} \leq 3\%$ (2) the best success rate irrespective of accuracy drop. (\%) represents model accuracy. (-) means a model with non-zero success rate could not be found \& hence accuracy can't be stated. \textit{Our attack has a significantly higher success rate as compared to the baseline with a much smaller accuracy drop under all settings.}\\}
\resizebox{\linewidth}{!}{%
\begin{tabular}{c|l|cc|cc|cc}
\toprule
  \multicolumn{2}{c|}{Dataset (Honest Model $\thetast$ Accuracy)} & \multicolumn{2}{c|}{CIFAR10 ($89.8\%$)} & \multicolumn{2}{c|}{Oxford-IIIT Pet ($92.2\%$)} & \multicolumn{2}{c}{Caltech-101 ($94.9\%$)} \\

 \multicolumn{2}{c|}{Success Rate} & $\Delta_{\rm acc}\leq 3\%$ & Best ({\small $\Delta_{\rm acc}$}) & $\Delta_{\rm acc}\leq 3\%$ & Best ({\small $\Delta_{\rm acc}$}) & $\Delta_{\rm acc}\leq 3\%$ & Best ({\small $\Delta_{\rm acc}$}) \\
 \midrule

\multirow{2}{*}{$k=1$}  & Baseline & 0.00 & 0.00 ({\small-}) & 0.00 & 0.00 ({\small-}) & 0.00 & 0.00 ({\small-}) \\
& Our & 0.64 & 0.90 ({\small 5.7\%})  & 0.88 & 0.94 ({\small 5.4\%})  & 0.74 & 0.85 ({\small 3.8\%})  \\
\midrule

\multirow{2}{*}{$k=300$}  & Baseline & 0.00 & 0.00 ({\small-}) & 0.10 & 1.00 ({\small87.3\%})  & 0.08 & 0.84 ({\small93.6\%})  \\
& Our & 0.76 & 0.90 ({\small 5.7\%})  & 0.88 & 0.94 ({\small 5.4\%})  & 0.74 & 0.85 ({\small 3.8\%})  \\
\bottomrule
\end{tabular}%
}
\label{tab:main_data_valuation}
\end{table}

\textbf{Behavior of our Single-Target attack w.r.t manipulation radius $C$ \& training set size.} Theoretically, the manipulation radius parameter $C$ in our attack objectives (Eq. \ref{eq:att1} \& \ref{eq:att2}) is expected to create a trade-off between the manipulated model's accuracy and the attack success rate. Increasing $C$ should result in a higher success rate as the manipulated model is allowed to diverge more from the (optimal) original model but on the other hand its accuracy should drop and vice-versa. We observe this trade-off for all three datasets and different values of ranking $k$, as shown in Fig.\ref{fig:transfer} (solid lines).

We also anticipate our attack to work better with smaller training sets, as there will be fewer samples competing for top-$k$ rankings. Experimentally, this is found to be true -- Pet dataset with the smallest training set has the highest success rates, as shown in Fig.\ref{fig:transfer} \& Table \ref{tab:main_data_valuation}.

\begin{figure}[t!]
    \centering
    
    \subfigure{\includegraphics[width=\textwidth]{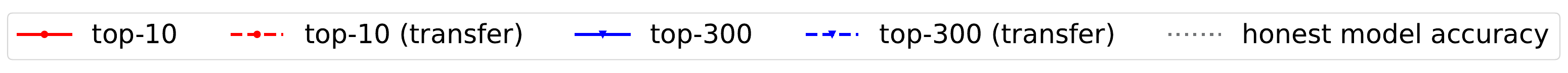}}
    \subfigure{\includegraphics[width=0.3\textwidth]{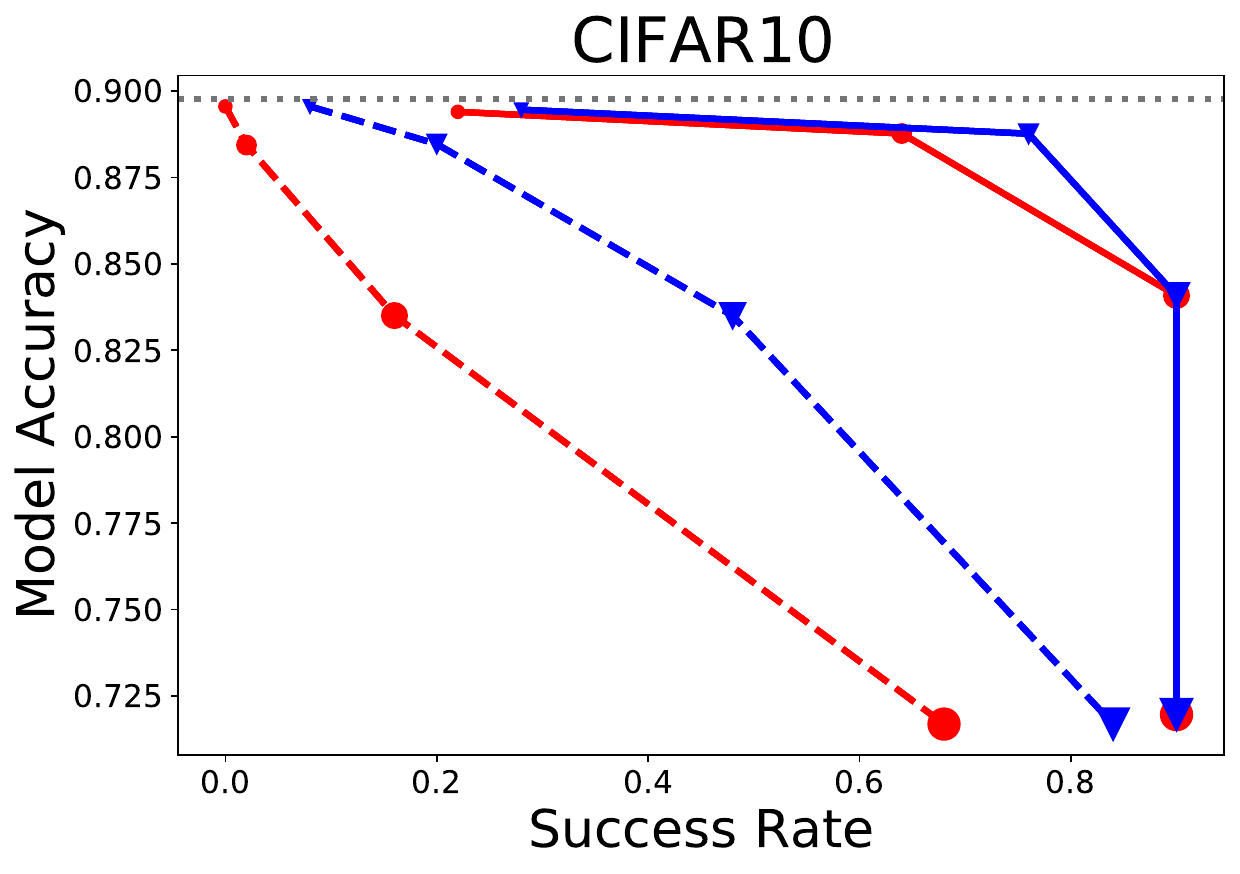}}
    \hfill
    \subfigure{\includegraphics[width=0.3\textwidth]{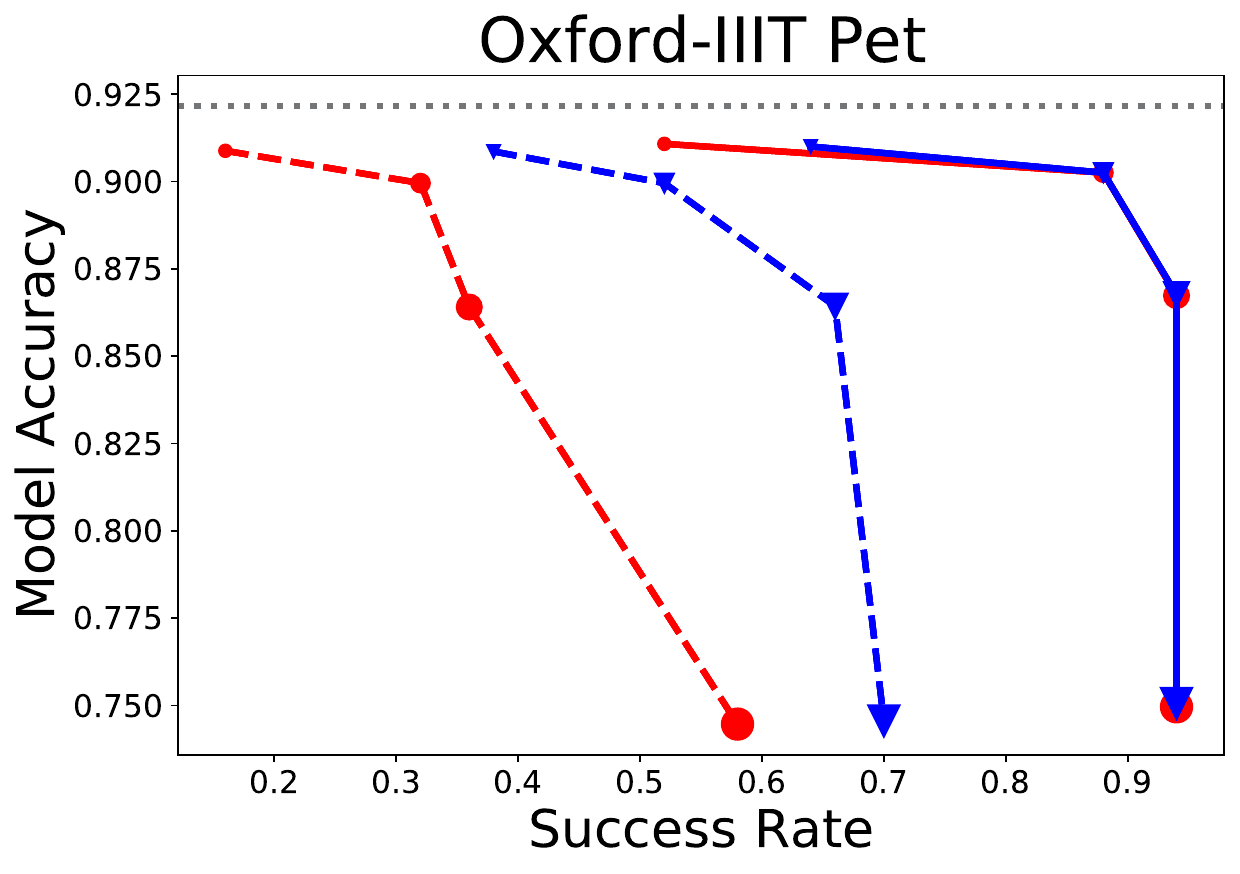}}
    \hfill
    \subfigure{\includegraphics[width=0.3\textwidth]{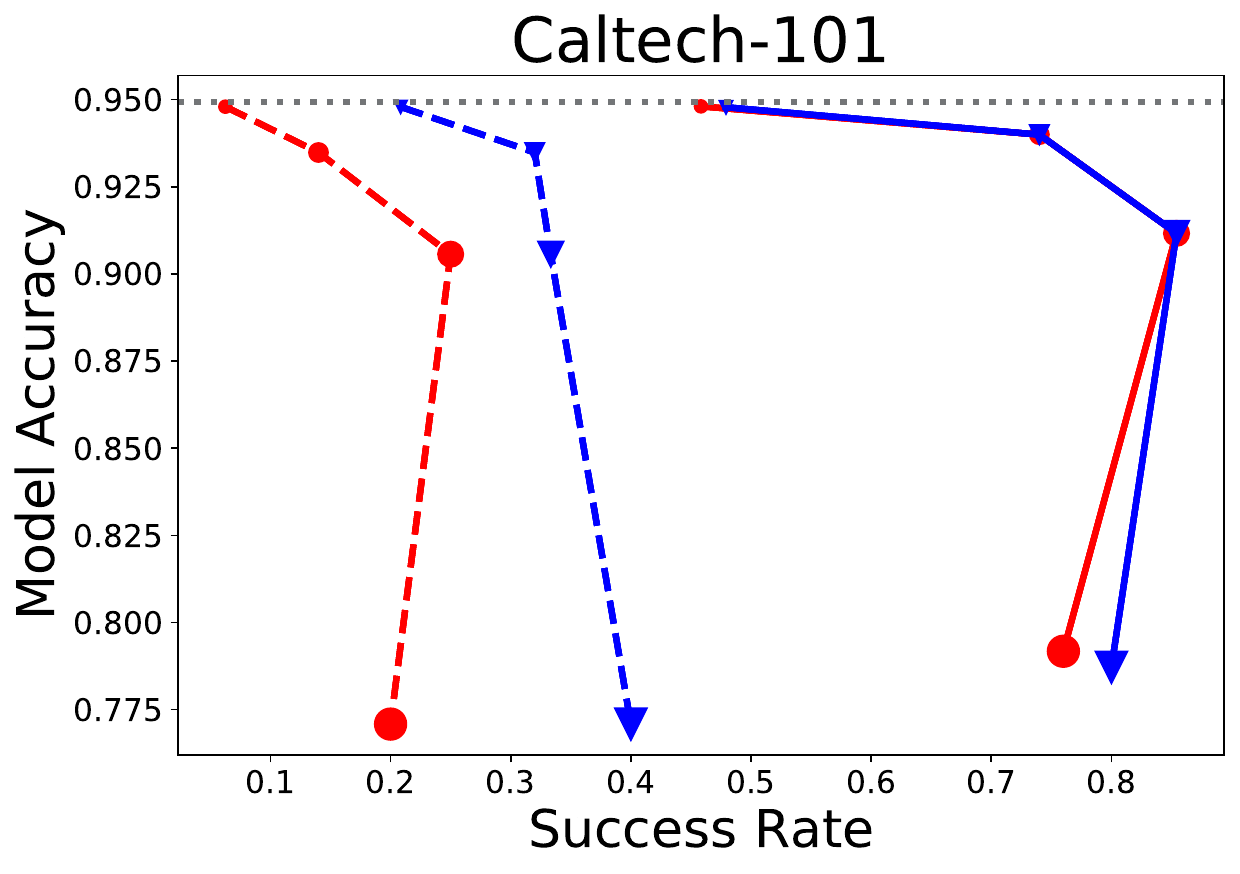}}

    \caption{Behavior and Transfer results for Single-Target Attack in the Data Valuation use-case. Value of manipulation radius $C$ (Eq.\ref{eq:att1}) increases from left to right in each curve. (1) Behavior on original test set (solid lines) : \textit{As manipulation radius $C$ increases, manipulated model accuracy drops while attack success rate increases}. (2) Transfer on an unknown test set (dashed lines): \textit{Success rate on an unknown test set gets better with increasing values of ranking $k$.}}
\label{fig:transfer}	
\end{figure}

\textbf{Our attacks transfer when influence scores are computed with an unknown test set.} When an unknown test set is used to compute influence scores, our attacks perform better as ranking $k$ increases, as shown in Fig.\ref{fig:transfer}. This occurs because rank of the target sample, optimized with the original test set, deteriorates with the unknown test set and a larger $k$ increases the likelihood of the target still being in the top-$k$ rankings.

\begin{figure}[t!]
    \centering
    \subfigure{\includegraphics[width=0.8\textwidth]{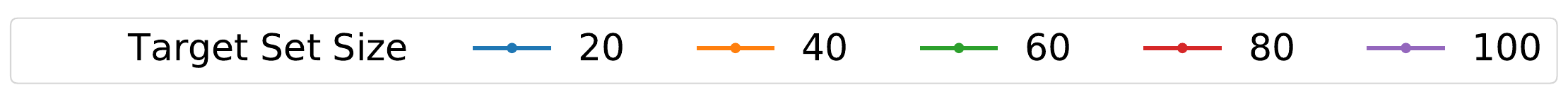}}
    \subfigure{\includegraphics[width=0.32\textwidth]{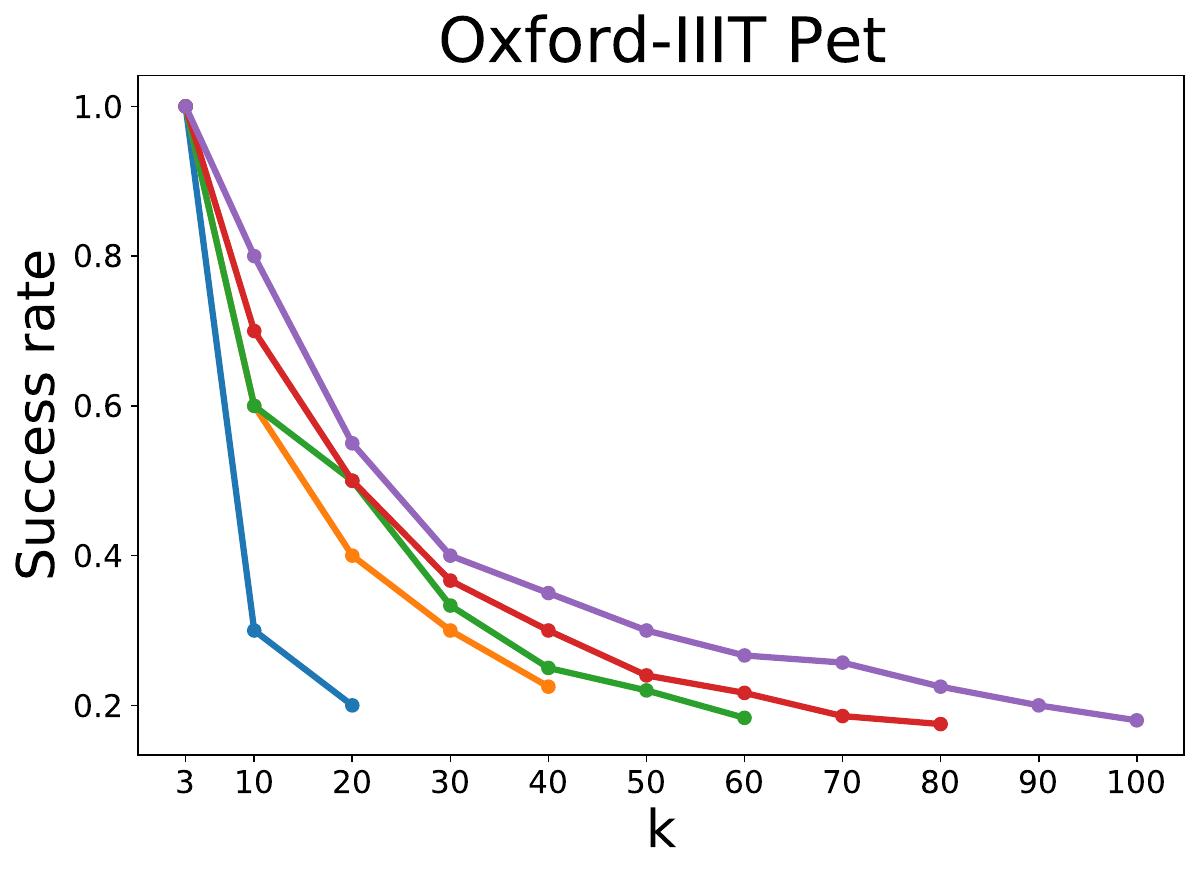}}
    \subfigure{\includegraphics[width=0.32\textwidth]{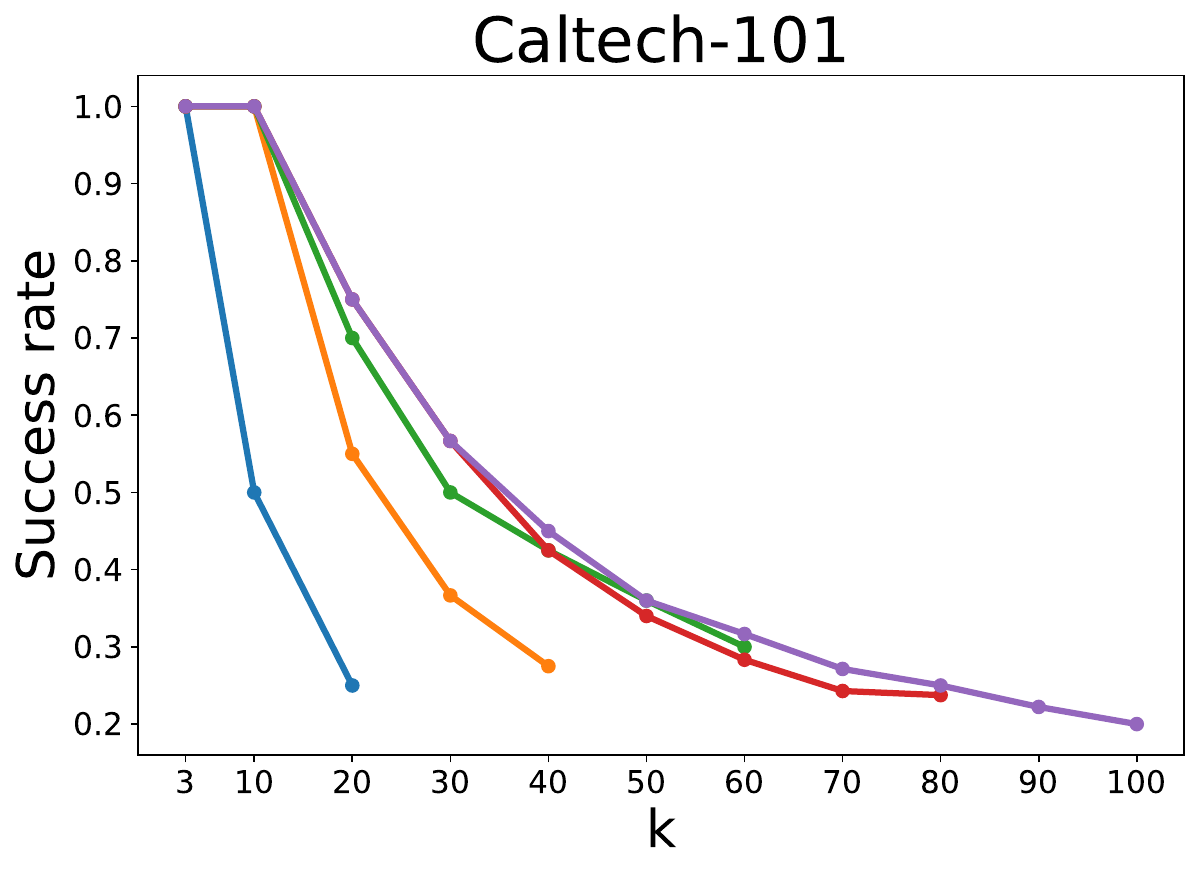}}
    \subfigure{\includegraphics[width=0.32\textwidth]{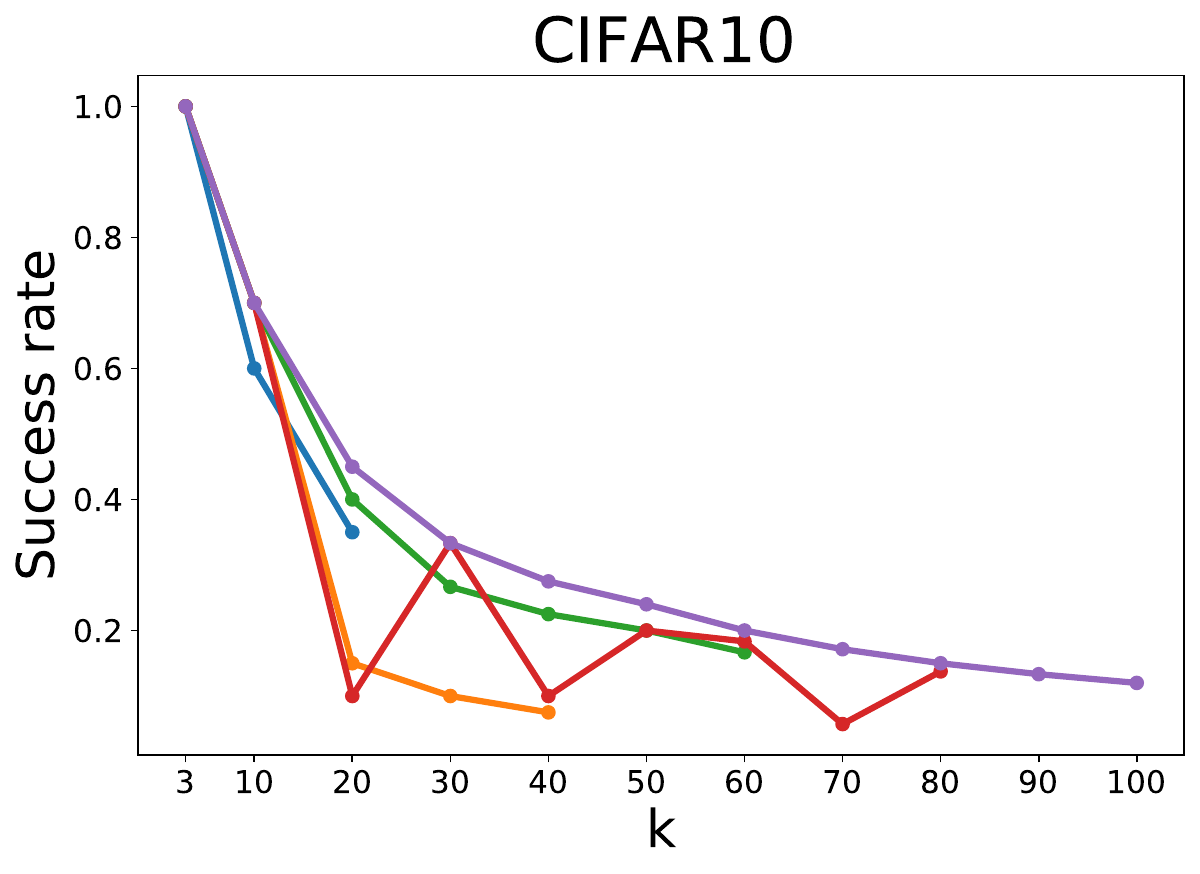}}
    \caption{Performance of Multi-Target Attack in the Data Valuation use-case. Results for the high-accuracy regime. \textit{Success Rates are higher when target set size is greater than the desired ranking $k$.}}
\label{fig:attack2_main}	
\end{figure}

Next we discuss results for the Multi-Target Attack scenario, where the target is not a single training sample, but rather a collection of multiple training samples. We investigate the following question.

\textbf{How does our Multi-Target Attack perform with changing target set size and desired ranking $k$?} Intuitively, our attack should perform better when the size of the target set is larger compared to ranking $k$ -- this is simply because a larger target set offers more candidates to take the top-$k$ rankings spots, thus increasing the chances of some of them making it to top-$k$. Our experimental results confirm this intuition; as demonstrated in Fig.\ref{fig:attack2_main}, we observe that (1) for a fixed value of ranking $k$, a larger target set size leads to a higher success rate; target set size of $100$ has the highest success rates for all values of ranking $k$ across the board, and (2) the success rate decreases with increasing value of $k$ for all target set sizes and datasets. These results are for the high-accuracy similarity regime where the original and manipulated model accuracy differ by less than $3\%$.\\

\textbf{Easy vs. Hard Samples.} We find that target samples which rank very high or low in the original influence rankings are easier to push to top-$k$ rankings upon manipulation (or equivalently samples which have a high magnitude of influence either positive or negative). This is so because the influence scores of extreme rank samples are more sensitive to model parameters as shown experimentally in Fig. \ref{fig:easyvshardpet} and App. Fig.\ref{app:fig:easyvshard}, thus making them more susceptible to influence-based attacks.

\begin{figure}[h]
    \centering
    
    \subfigure{\includegraphics[width=0.32\textwidth]{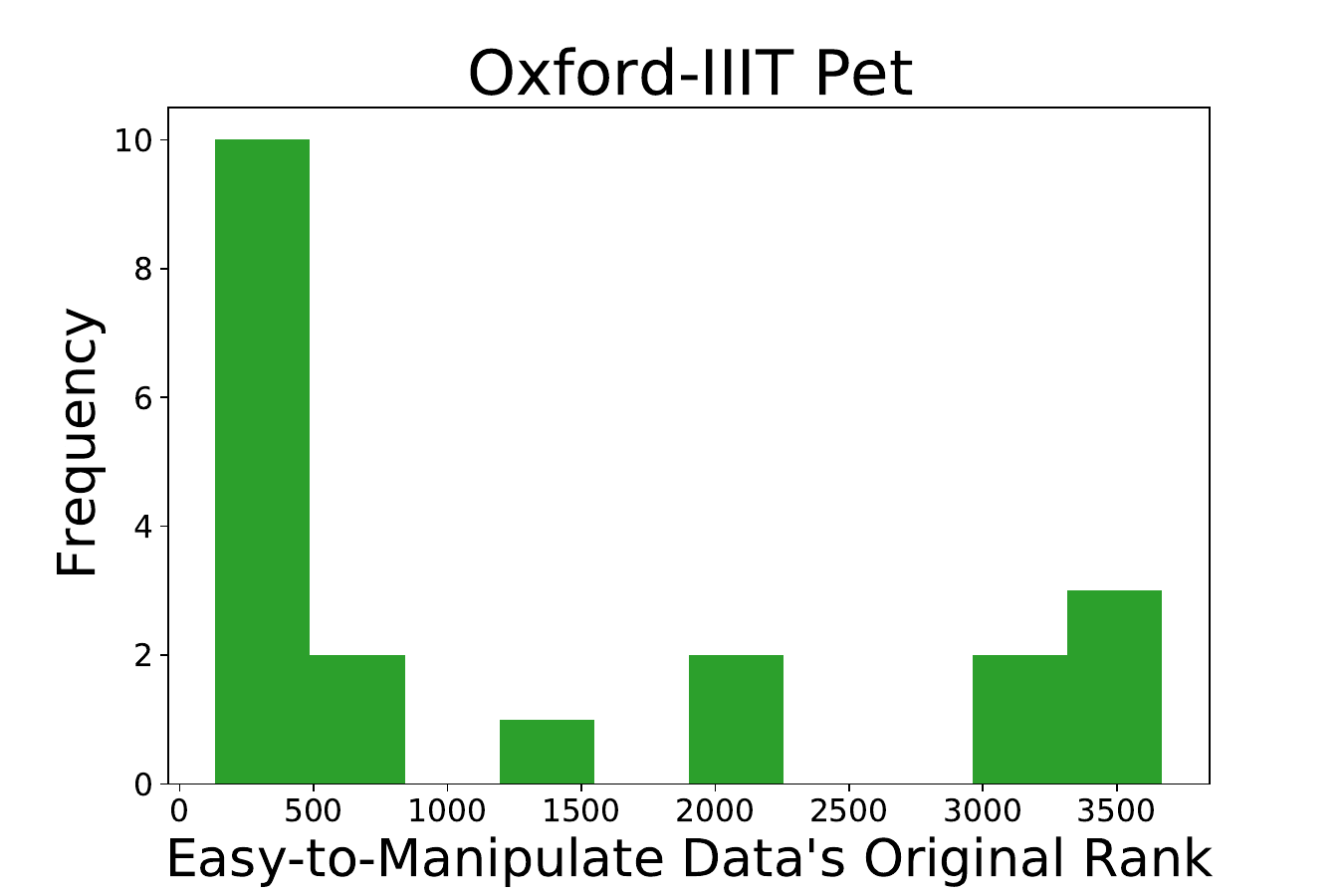}}
    \subfigure{\includegraphics[width=0.32\textwidth]{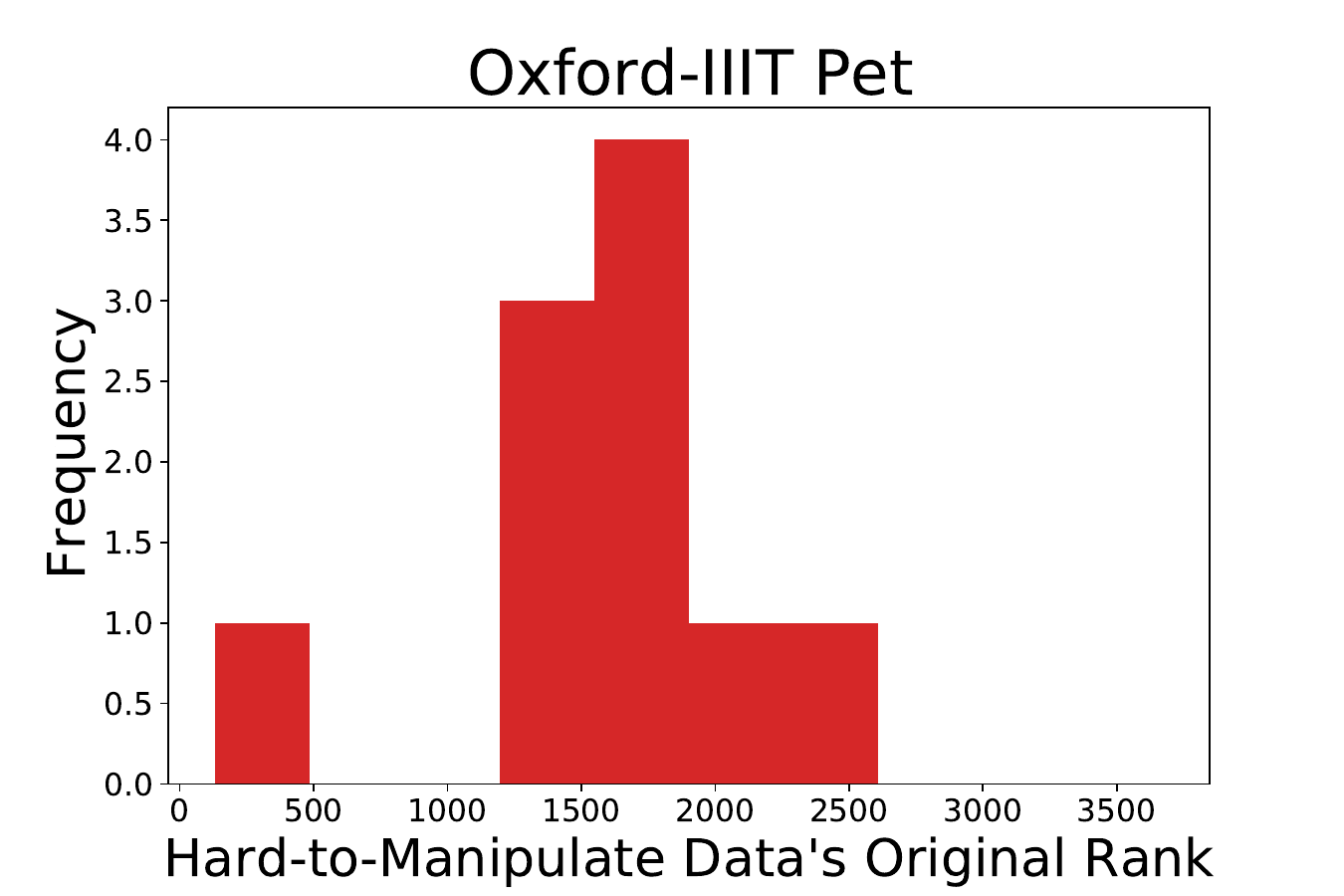}}
    \subfigure{\includegraphics[width=0.32\textwidth]{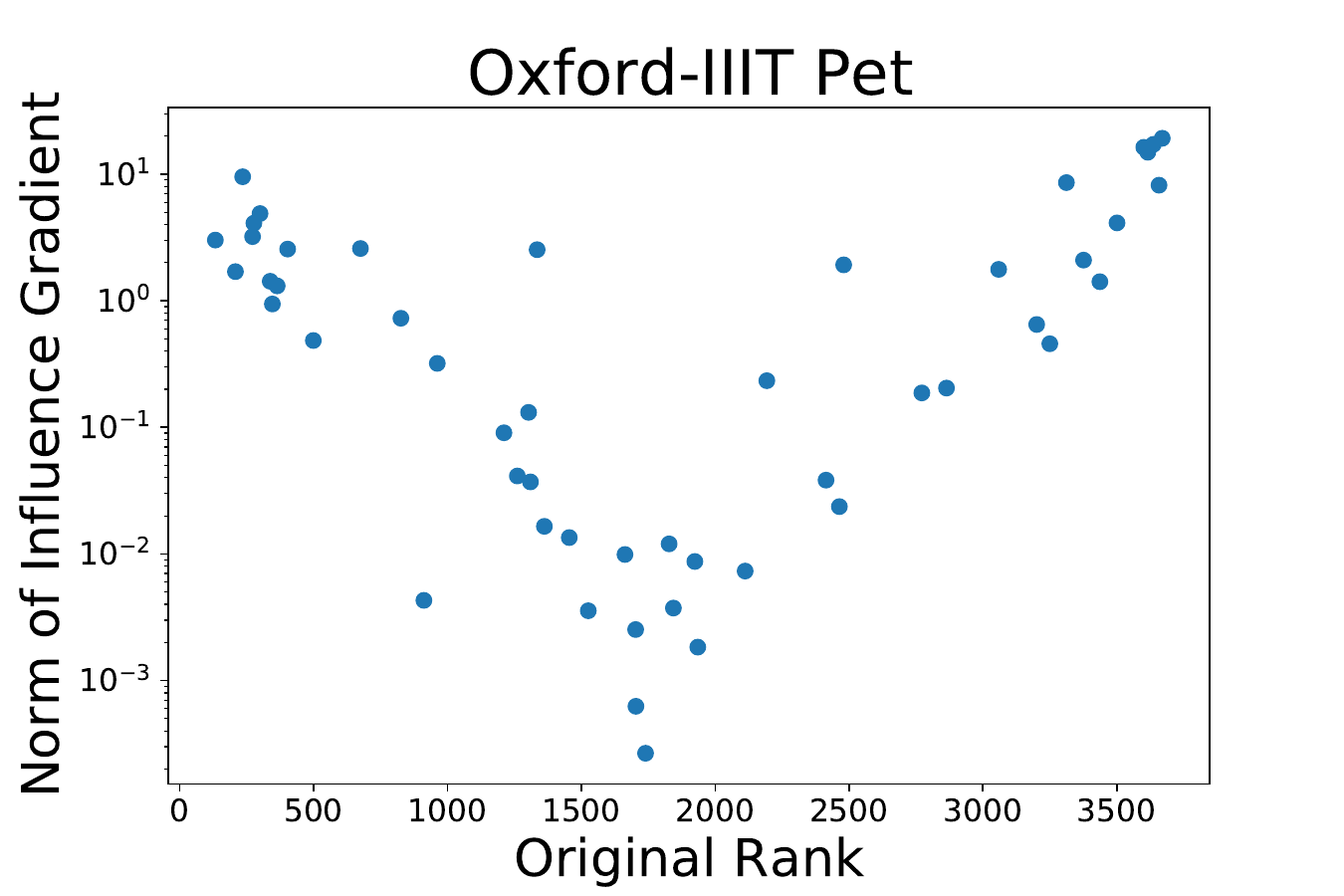}}

        \caption{Histograms for original ranks of easy-to-manipulate samples (L),  that of hard-to-manipulate samples (M), scatterplots for influence gradient norm vs. original ranks of (R) 50 random target samples. Ranking $k:=1$. For other datasets, see App. Fig.\ref{app:fig:easyvshard}.}
        \label{fig:easyvshardpet}
\end{figure}

\textbf{Imposibility Theorem for Data Valuation Attacks.} We observe in Fig.\ref{fig:transfer} that even with a large $C$, our attacks still cannot achieve a $100\%$ success rate. Motivated by this, we wonder if there exist target samples for which the influence score cannot be moved to top-$k$ rank? The answer is yes and we formally state this impossibility result as follows.

\begin{tcolorbox}[colback=blue!5!white, colframe=blue!75!black, boxrule=0.5mm, sharp corners]
\begin{theorem}
\label{thm:impossibility}
For a logistic regression family of models and any target influence ranking $k\in\mathbb{N}$, there exists a training set $Z_{\rm train}$, test set $Z_{\rm test}$ and target sample $z_{\rm target} \in Z_{\rm train}$, such that no model in the family can have the target sample $z_{\rm target}$ in top-$k$ influence rankings.\\
\end{theorem}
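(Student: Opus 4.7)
The plan is to construct a specific triple $(Z_{\rm train}, Z_{\rm test}, z_{\rm target})$ in which the target's loss gradient vanishes identically in $\theta$, thereby forcing its influence to be zero for every parameter choice, while at least $k$ other training samples retain strictly positive influence. The key observation is that for logistic regression the loss gradient at a sample $(x,y)$ takes the form $\nabla_\theta L(z,\theta) = -y\,\sigma(-y x^\top \theta)\,x$, which is proportional to $x$. Choosing the target feature to be $x_{\rm target} = 0$ therefore annihilates $\nabla_\theta L(z_{\rm target},\theta)$ for every $\theta$, and the definition of $\mathcal{I}$ then immediately yields $\mathcal{I}_\theta(z_{\rm target}, z) = 0$ for every test point $z$ and every $\theta$ at which the Hessian is invertible.

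Concretely, I would work in one dimension and take $Z_{\rm train}$ to consist of $k$ copies of $(1,+1)$ together with the target sample $z_{\rm target} = (0,+1)$, and $Z_{\rm test} = \{(1,-1)\}$. Before applying the influence-function formula, one must verify the definitional assumptions of twice-differentiability and strict convexity of the empirical risk. Twice-differentiability is automatic for the logistic loss, and the Hessian $H_\theta = \tfrac{1}{k+1}\sum_i \sigma(y_i x_i \theta)\sigma(-y_i x_i \theta)\, x_i^2 = \tfrac{k}{k+1}\sigma(\theta)\sigma(-\theta)$ is strictly positive for every $\theta\in\mathbb{R}$, because the $k$ non-target samples contribute $k$ positive terms with $x_i = 1$. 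Hence the formula for $\mathcal{I}_\theta$ is well-defined across the entire logistic-regression family.

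For each non-target sample $z_i = (1,+1)$, a direct computation gives $\mathcal{I}_\theta(z_i, z_{\rm test}) = -\sigma(\theta)\,H_\theta^{-1}\,(-\sigma(-\theta)) = (k+1)/k$, which is strictly positive and, notably, independent of $\theta$. Combined with $\mathcal{I}_\theta(z_{\rm target}, z_{\rm test}) = 0$, this shows that $k$ training samples strictly dominate $z_{\rm target}$ in every ranking the family can produce, so no model in the family places $z_{\rm target}$ in the top-$k$. The main conceptual point I expect to wrestle with is identifying a data-level invariant that survives every parameter choice; the zero-feature trick does the algebraic work, and the only thing to guard against is inadvertently violating the strict convexity required by the definition, which the construction avoids thanks to the $k$ non-trivial training features. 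This also matches the authors' intuition that the impossibility is a property of the data rather than of the model.
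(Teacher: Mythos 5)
Your proposal is correct, and it proves the theorem by a genuinely different construction than the paper's. The paper also exhibits an explicit dataset with a $\theta$-invariant ordering, but its mechanism is a sign argument: it places the antipodal pair $(\mathbf{e}_1,1)$ and $(-\mathbf{e}_1,1)$ in the training set with $Z_{\rm test}=\{(\mathbf{e}_1,1)\}$, computes both influences in closed form, and shows $\mathcal{I}_\theta(z_{\rm target},\Ztt)<0<\mathcal{I}_\theta(z_{\rm bar},\Ztt)$ for every $\theta$, then duplicates $(-\mathbf{e}_1,1)$ $k$ times for general $k$ (it also pads the training set with $(\mathbf{e}_i,y_i)$, $i=2,\dots,d$, to keep $H_\theta$ full rank in $\mathbb{R}^d$). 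Your mechanism is instead a zero-gradient argument: a target with feature vector $0$ has $\nabla_\theta L(z_{\rm target},\theta)\equiv 0$, hence influence identically zero, while the $k$ duplicated samples each have influence $(k+1)/k>0$; your one-dimensional setting makes the Hessian a scalar and the verification of invertibility immediate. Your computations check out, the duplicated dominating samples mirror the paper's own device for general $k$, and ties among the $k$ dominating samples are harmless since each strictly exceeds the target. What each approach buys: yours is more elementary (no Hessian-inverse bookkeeping for the target, one dimension suffices), while the paper's target is a nondegenerate sample that actually participates in training, which makes the impossibility feel less like an artifact of a vacuous data point; relatedly, your construction leans on the absence of a bias term (with a bias the target's gradient would no longer vanish), an assumption the paper also makes implicitly but on which its antipodal construction depends less delicately. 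Both proofs establish the same data-level invariant that survives every choice of model parameters, which is the conceptual heart of the theorem.
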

\end{tcolorbox}

The proof for the theorem can be found in Appendix Sec. \ref{app:proofimpossibility}.

\textbf{\\Ablation Study : What components contribute to the success of our attack?}~Since our attack is a combination of several ideas, we conduct an ablation study to understand the effect of each idea on the success rate, as reported in Table \ref{tab:ablation}. The different ideas are as follows.\label{para:ablation study}

\begin{itemize}

\item Maximize the target data's influence: Given a target sample, the simplest idea to move it to top-$k$ influence rankings is to maximize its own influence score, objective written as $\max_{\theta': {\rm dist}(\theta^*, \theta')\leq C}\mathcal{I}_{\theta'}(z_{\rm target}, Z_{\rm test})$. This attack doesn't achieve high success rates, even without accuracy constraints which could be due to an inherent drawback : this objective doesn't consider other training samples' influence scores.

\item $+$ Minimize the influence of samples that are ranked top-$k$: Instead of just increasing the influence score of the target sample, this objective also lowers the score  for the samples currently ranked top-$k$, given as $\max_{\theta': {\rm dist}(\theta^*, \theta')\leq C}\mathcal{I}_{\theta'}(z_{\rm target}, Z_{\rm test}) - \frac{1}{K}\sum_{z:\text{rank of } z\leq k}\mathcal{I}_{\theta'}(z, Z_{\rm test})$. We observe empirically that the optimization procedure of this objective gets stuck in local minima easily.

\item $+$ \textbf{(Our objective)} Minimize the influence score of all samples whose influence is larger than that of the target sample : This is the final objective used by us and lowers the influence of all training samples which have a higher influence than that of the target sample instead of just the top-$k$ (as in the previous objective),
$\min_{\theta': {\rm dist}(\theta^*, \theta')\leq C}\frac{1}{|S_{\theta'}|}\sum_{z\in S_{\theta'}} \mathcal{I}_{\theta'}(z, Z_{\rm test}) - \mathcal{I}_{\theta'}(z_{\rm target}, Z_{\rm test})$
where $S_{\theta'}\subseteq Z_{\rm train}$ has all training samples $z$ s.t. $\mathcal{I}_{\theta'}(z, Z_{\rm test}) > \mathcal{I}_{\theta'}(z_{\rm target}, Z_{\rm test})$. Empirically, we find that this objective function decreases the chance of being stuck at suboptimal solutions and the loss keeps reducing throughout the optimization trajectory resulting in higher success rates.

\item  $+$ \textbf{(Our final attack)} Multiple random initializations. Because the above objective function is non-convex, we find that using multiple random initializations of the honest model within a radius $C$ helps to obtain a better solution, especially with a larger value of parameter $C$, when the search space is bigger.  As a result, we observe significant improvement in terms of `best' success rates (where $C$ can be very large). This is our final attack.
\end{itemize}

\begin{table}[htbp]
\caption{Ablation study for Single-Target Attack in Data Valuation. Ranking $k:=10$. ${\small \Delta_{\rm acc}}:= \small \rm TestAcc(\thetast) - \small \rm TestAcc(\thetada)$ represents the drop in test accuracy for a manipulated model. (\%) represents model accuracy. Two success rates are reported : (1) when $\small \Delta_{\rm acc} \leq 3\%$ and (2) the best success rate irrespective of accuracy drop. \textit{Our final objective with multiple random initializations of original model within radius $C$ leads to highest success rates.\\}}
\centering
\resizebox{\linewidth}{!}{%
\begin{tabular}{l|cc|cc|cc}
\toprule
 Dataset (Honest Model $\thetast$ Accuracy) & \multicolumn{2}{c|}{CIFAR10 ($89.8\%$)} & \multicolumn{2}{c|}{Oxford-IIIT Pet ($92.2\%$)} & \multicolumn{2}{c}{Caltech-101 ($94.9\%$)} \\

Success Rate & $\Delta_{\rm acc}\leq 3\%$ & Best ({\small $\Delta_{\rm acc}$}) & $\Delta_{\rm acc}\leq 3\%$ & Best ({\small $\Delta_{\rm acc}$}) & $\Delta_{\rm acc}\leq 3\%$ & Best ({\small $\Delta_{\rm acc}$}) \\
 \midrule

Max. Target Inf. & 0.44 & 0.64 ({\small 15.9\%})  & 0.64 & 0.70 ({\small 15.2\%})  & 0.52 & 0.72 ({\small 11.7\%})  \\
$+$ Min. Top-$k$ Inf. & 0.60 & 0.72 ({\small 20.0\%})  & 0.74 & 0.74 ({\small 1.6\%})  & 0.68 & 0.68 ({\small 3.6\%})  \\
$+$ Min. Higher-Rank Inf. & 0.60 & 0.80 ({\small 6.0\%})  & 0.88 & 0.88 ({\small 2.1\%})  & 0.74 & 0.77 ({\small 8.3\%})  \\
$+$ Multiple Rand Init. (\textbf{Ours}) & \textbf{0.64} & \textbf{0.90} ({\small 5.5\%})  & \textbf{0.88} & \textbf{0.94} ({\small 5.3\%})  & \textbf{0.74} & \textbf{0.85} ({\small 7.1\%})  \\
\bottomrule
\end{tabular}
}

\label{tab:ablation}
\end{table}

\section{Downstream Application 2: Fairness}
\label{sec:fairness_sec}
Recently, a lot of studies have used influence functions in different ways to achieve fair models~\citep{li2022achieving, wang2024fairif, sattigeri2022fair, kong2021resolving, pang2024fair, chhabra2023data, chen2024fast, yao2023understanding, ghosh2023biased}. In our paper, we focus on the study by \cite{li2022achieving} as they use the same definition of influence functions as us. The suggested approach in \cite{li2022achieving} to achieve a fair model is by \textit{reweighing training data} based on influence scores for a \textit{base model} and then using this reweighed data to train a new downstream model from scratch. This downstream model is expected to have high fairness as a result of the reweighing. For a pictorial representation of this process, see App. Fig. \ref{fig:fairness_process}. The weights for training data are found by solving an optimization problem which places influence functions in the constraints. See Appendix Sec.\ref{app:sec:fairmani} for more details on the optimization problem. 

Ultimately, weights of the training data determine the fairness of the downstream model. Since these weights are derived from influence scores, manipulating the influence scores can alter the fairness of the downstream model. As a result, a malicious entity who wants to spread unfairness is incentivized to manipulate influence scores.

\textbf{Threat Model.} Similar to the general setup, training and test set are fixed, influence calculator is assumed to be the adversary. Model trained by the influence calculator is now the \textit{base model used by the reweighing pipeline.} The adversarial model training can tamper the \textit{base model} parameters from $\theta^*$ to $\theta'$ to manipulate the influence scores while maintaining similar test accuracy. 

\textbf{Goal of the adversary.} Fairness of the final downstream model is measured with a fairness metric. The concrete goal of the adversary is to make the value of this fairness metric worse for the downstream model than what could have been achieved without adversarial manipulation. 

\textbf{Attack.} Since the goal of the adversary in this case is not tied to specific target samples, we propose an untargeted attack for the adversary. Our attack is deceptively simple -- scale the base model $\thetast$ by a constant $\lambda > 0$. The malicious base model output by the model trainer is now $\thetada = \lambda \cdot \thetast$, instead of $\thetast$. Note that for logistic regression the malicious and original base model are indistinguishable since scaling with a positive constant maintains the sign of the predictions, leading to the same accuracy.

\subsection{Fairness Manipulation Experiments}
\label{sec:exp:fairness_attack}
We investigate experimentally if our scaling attack can be successful in reducing fairness of the final downstream model. We use three standard fairness benchmarks : Adult Credit \cite{Adult}, German Credit \cite{German} and Compas \cite{angwin2022machine}. For details about these datasets, see Appendix Table~\ref{tab:fairdatadetails}. We train L2-regularized logistic regression models for these datasets which act as base models for the training data reweighing pipeline. To carry out our fairness manipulation attack, we scale the models weights with different scaling coefficients and feed these scaled models to the reweighing pipeline. The reweighed training data is then used to train a final downstream model, which is expected to be fair. Parameter values for L2 regularization and the reweighing are same as in \cite{li2022achieving} and can be found in Appendix Tables \ref{tab:fairattackparams} and \ref{tab:fairdatadetails}.

\textbf{Evaluation.} We evaluate the fairness of the final downstream model with a popular fairness metric, called demographic parity (DP) (See Appendix Def. \ref{def:dp} for definition). A higher value of DP gap implies less fairness and vice-versa. We call the attack a success if the final downstream model post-attack is less fair (higher DP gap) than the final model achieved without attacking. Additionally, we expect the test accuracies of the downstream model with and without attack to be similar to keep the manipulation discreet. Note that this is different from our threat model's indistinguishability constraint, which applies to the base model, not to the downstream model.

\textbf{Results.} As can be seen from our results in Fig.\ref{fig:fairness_dp_test}, the scaling attack works surprisingly well across all datasets -- downstream models achieved after our attack are considerably less fair (higher DP gap) than the models without attack, achieving a maximum difference of 16\% in the DP gap. Simultaneously, downstream models post-attack maintain similar test accuracies to downstream models without attack. Since the process to achieve the downstream model involves a lot of steps, including solving a non-convex optimization problem to find training data weights and then retraining a model, we sometimes do not see a smooth monotonic trend in fairness metric values w.r.t. scaling coefficients. However, this does not matter much from the attacker's perspective as all the attacker needs is \textit{one} scaling coefficient which meets the attack success criteria.

\begin{figure}[t!]
    \centering
    \subfigure{\includegraphics[width=\textwidth]{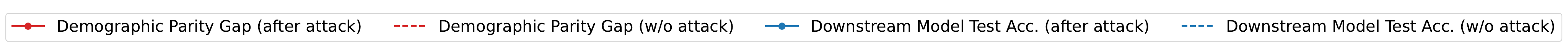}}
    \subfigure{\includegraphics[width=0.32\textwidth]{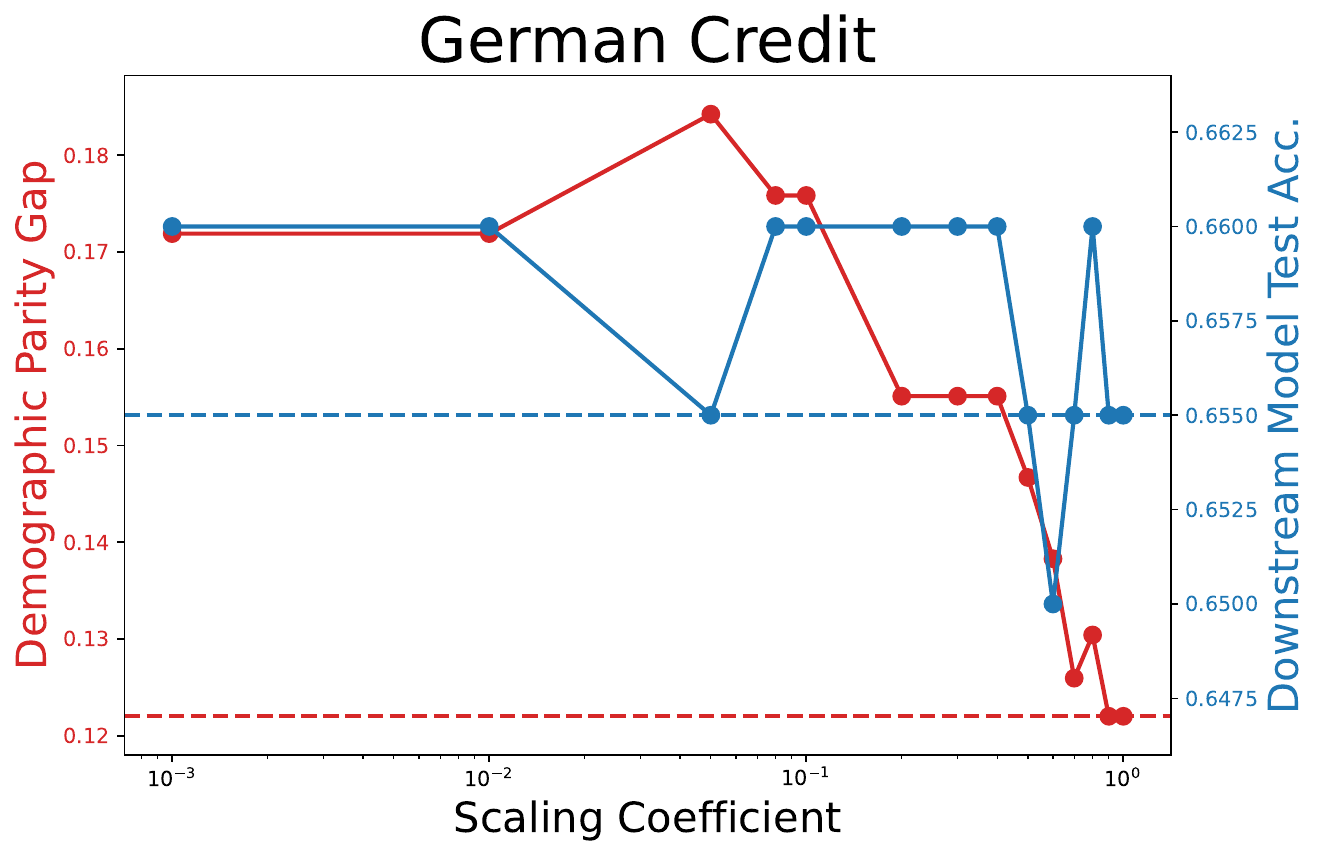}}
    \subfigure{\includegraphics[width=0.32\textwidth]{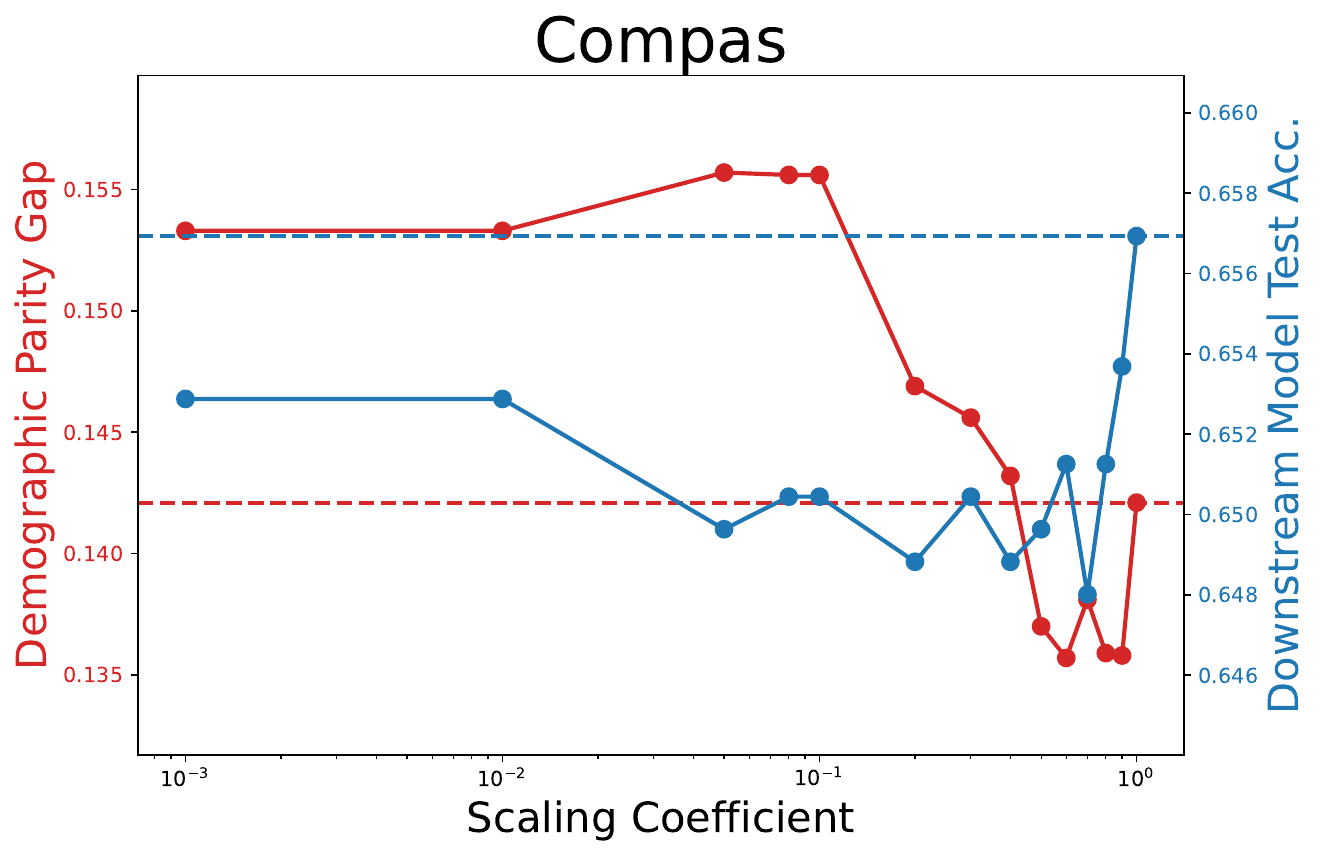}}
    \subfigure{\includegraphics[width=0.32\textwidth]{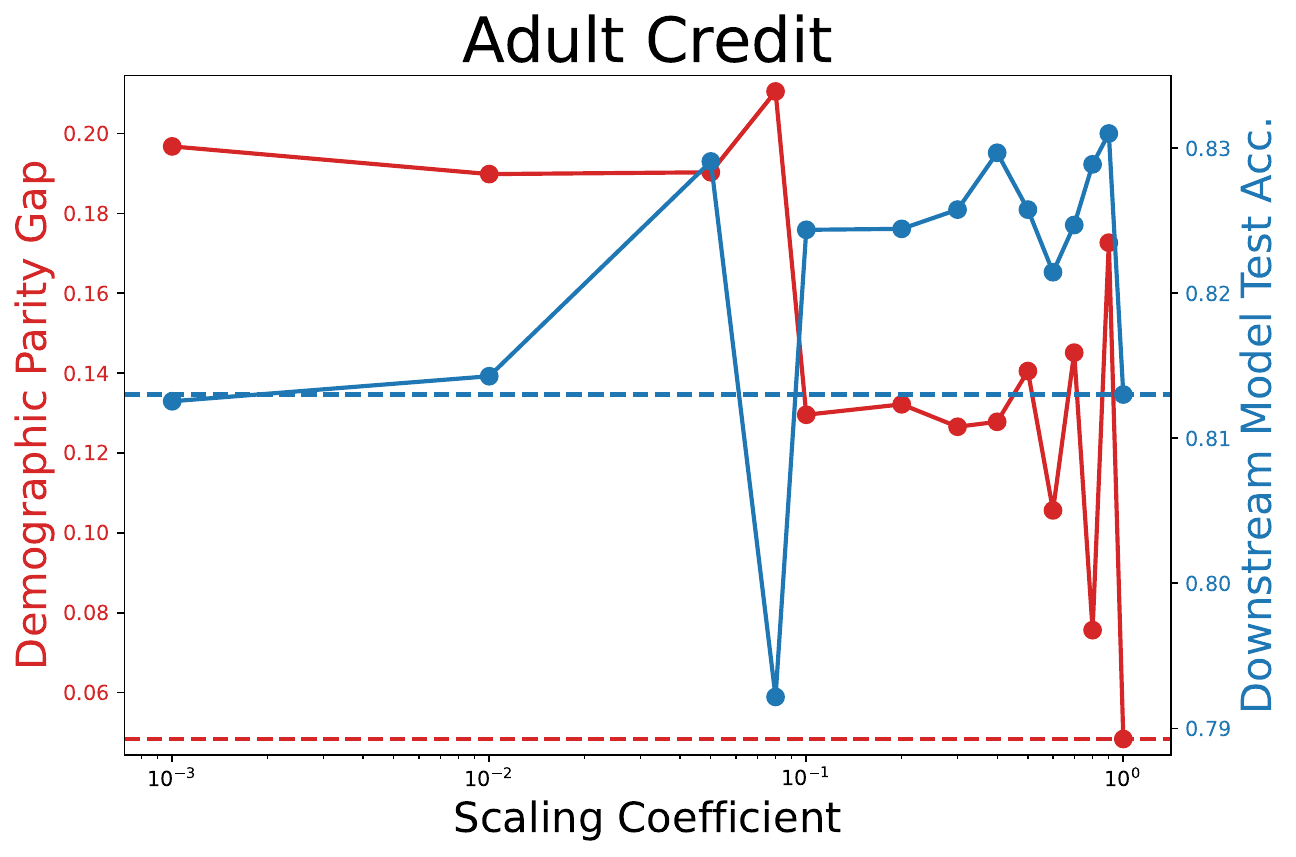}}
    \caption{Scaling attack for the Fairness use-case. \textit{Demographic Parity Gap of post-attack downstream models is higher than that of those w/o attack while test accuracies are comparable. This implies that post-attack downstream models are less fair than those w/o attacks.} Scaling coefficients in log scale.}
\label{fig:fairness_dp_test}	
\end{figure}

\section{Discussion on Susceptibility and Defense}
The susceptibility of influence functions to our attacks can come from the fact that there can exist models that behave very similarly (Rashomon Effect \cite{rudin2024amazing}) but have different influential samples up to an extent. Equivalently, changing the influence for many samples does not affect the model accuracy much, as is shown by our experiments (though there exist some samples for which the influence can’t be manipulated, from theorem \ref{thm:impossibility}). Some plausible ways to defend against the attacks are (1) providing cryptographic proofs of honest model training using Zero-Knowledge Proofs \cite{sun2023zkdl,abbaszadeh2024zero} and, (2) to check if the model is atleast a local minima or not, since IFs assume that the model is an optimal solution to the optimization.

\section{Related Work}
\textbf{Fragility of Influence Functions.} Influence functions proposed in \cite{koh2017understanding} are an approximation to the effect of upweighting a training sample on the loss at a test point. This approximation error can be large as shown by ~\citep{basu2020influence, bae2022if, epifano2023revisiting}, making influence functions fragile especially for deep learning models. Our work is \emph{orthogonal} to this line of work as we study the robustness of influence functions w.r.t. \textit{model parameters} instead of approximation error of influence functions w.r.t. the true influence.

\textbf{Model Manipulation in the Threat Model.} Manipulating models to execute attacks is a prevalent theme in the literature. \cite{slack2020fooling} use model manipulations to corrupt feature attributions in tabular data while \citep{heo2019fooling,anders2020fairwashing} do so in vision models. \cite{pruthi2019learning} corrupt attention-based explanations for language models while maintaining model accuracy. \cite{shahin2022washing} show that it is possible to corrupt a fairness metric by manipulating an interpretable surrogate of a black-box model while maintaining empirical performance of the surrogate. Similar to these, our threat model also allows the adversary to manipulate models while maintaining the test accuracy. However, our adversarial goal is to corrupt influence-based attributions.

\textbf{Data Manipulation Attack on Explanations.} \citep{ghorbani2019interpretation, alvarez2018towards, zhang2020interpretable, dombrowski2019explanations, kindermans2019reliability} have studied how data can be manipulated to corrupt feature attributions. On the contrary, firstly, we keep data fixed and manipulate the model and secondly, we work with data attributions rather than feature attributions.

\section{Conclusion \& Future Work}
While past work has mostly focused on feature attributions, in this paper we exhibit realistic incentives to manipulate data attributions. Motivated by the incentives, we propose attacks to manipulate outputs from a popular data attribution tool -- Influence Functions. We demonstrate the success of our attacks experimentally on multiclass logistic regression models on ResNet features and standard tabular fairness datasets. Our work lays bare the vulnerablility of influence-based attributions to manipulation and serves as a cautionary tale when using them in adversarial circumstances.

While logistic regression is a good starting point for formulating and solving a new problem and these models are still relevant in many domains, we do think attacking influence functions for large models is an interesting avenue for future research. Some other future directions include exploring different threat models, additional use-cases and manipulating other kinds of data attribution tools.

\smallbreak

\section*{Acknowledgements}
This work was supported by grants NSF CIF-2402817, NSF CNS-1804829, SaTC-2241100, CCF-2217058, ARO-MURI W911NF2110317 and ONR under N00014-24-1-2304.

\bibliographystyle{plainnat}
\bibliography{neurips_2024}


\newpage
\appendix
\section{Appendix}
\subsubsection{Auditing the Influence Calculator by Supplying Test Data}
\label{app: auditing}
We provide an auditing algorithm that can detect if the influence calculator outputs arbitrary numbers as influence scores.

We first collect a sequence of test points $z_{\rm test\_1}, \cdots, z_{\rm test\_d}$ such that the rank of $G_{\rm test}$ is $d$, where the $i$th row in $G_{\rm test}\in \mathbb{R}^{d\times d}$ is $\nabla_\theta L(z_{\rm test}, \theta^*)$ and $d$ is the size of model parameters $\theta^*$.
This can be done because $\theta^*$ is publicly known by the auditor.
We then query the influence calculator by feeding $z_{\rm test\_i}$ one by one and collect the returned influence scores as $I\in \mathbb{R}^{m\times n}$.
We compute the matrix $C:=G_{\rm test}^{-1}I$.
Then we feed a new sequence of test points $z_{\rm test\_{d+1}}, \cdots, z_{\rm test\_{2d}}$ and suppose $I_{i}\in \mathbb{R}^{m}$ ($i=d+1, \cdots, 2d$) are the returned influence scores.
If there is any $i$ s.t. $I_i\neq \nabla_\theta L(z_{\rm test}, \theta^*)^\top C$, we return \textit{True}, i.e. state this influence calculator is malicious; otherwise, we return \textit{False}.

We prove that if the influence calculator is honest, this algorithm will return False. In this case,  $I = - G_{\rm test} H_{\theta^*} G_{\rm train}^\top$.
Then the returned score $I_{i} = - \nabla_\theta L(z_{\rm test}, \theta^*)^\top H_{\theta^*} G_{\rm train}^\top =\nabla_\theta L(z_{\rm test}, \theta^*)^\top G_{\rm test}^{-1} \left(-G_{\rm test}H_{\theta^*} G_{\rm train}^\top\right) = - \nabla_\theta L(z_{\rm test}, \theta^*)^\top G_{\rm test}^{-1} I = \nabla_\theta L(z_{\rm test}, \theta^*)^\top C$ should pass the auditing.
In contrast, if the malicious influence calculator returns arbitrary scores, it will be captured by this auditing algorithm.

\subsection{Data Manipulation Attack Details}

\begin{figure}[htbp!]
    \centering \includegraphics[width=\linewidth]{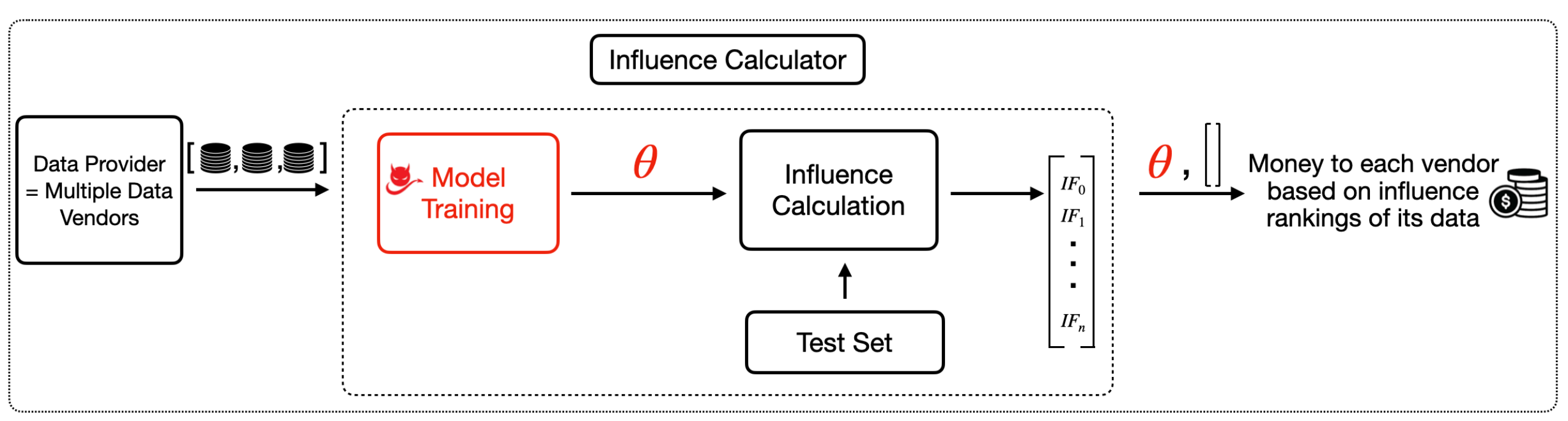}
    
        \caption{Data Valuation Setup and Threat Model.}
        \label{app:fig:data_val_setup_pic}
\end{figure}

\subsubsection{Efficient Backward Pass Algorithm}

Our idea of rewriting the attack objective involves two essential steps : (1) linearizing the objective (2) making the linearized objective backward-friendly in PyTorch.

\underline{\textit{Linearize the attack objective}}: Generally the attack objective can be a non-linear combination of influence functions over different training samples, which makes the backward pass inefficient.
Therefore we first transform the given objective into a linear combination of influence functions, $\lhattfull := \ut$ for some vector $u \in \mathbb{R}^n$, such that, the objective and gradient values are the same, $\hlatt(\cdot) = \latt(\cdot)$ and $\glhatt(\cdot) = \glatt(\cdot)$. Observe that from chain rule $\glattfull = \sum_{z\in Z} \dldi \cdot \ginf$. Therefore, we can set $u$ as $\left(\dldi :z\in Z\right)$ while meeting the equal objective and gradient value requirement. Influence scores in this vector are computed using an efficient forward pass algorithm, given in Alg. \ref{alg:frwdonly}.

\smallbreak

\underline{\textit{Get a PyTorch backward-friendly attack objective}}: Simply expanding our linearized attack objective gives, $\hlatt(\cdot) = v_{\theta, 1}^{\top} H_{\theta}^{-1}v_{\theta, 2}$ where $v_{\theta, 1}= \left(-\nabla_\theta \sum_{i=1}^{m} L\left(\zti, \theta\right)\right)$ and $v_{\theta, 2}= \left(\nabla_\theta \sum_{z\in Z}u_z\cdot  L(z, \theta)\right)$. Gradient computations for this objective will have to go through HIVPs, which is highly inefficient. Therefore we next convert the linearized objective into one which does not involve HIVPs, again such that the objective and gradient values are same as the original objective.

Using chain rule, the gradient of the expanded linearized attack objective can be written as $\glhatt(\cdot) = \left( \nabla_{\theta} v_{\theta, 1}\right)^{\top}u_2 + u_1^\top \left( \nabla_{\theta} v_{\theta, 2}\right) - u_1^{\top}\left( \nabla_{\theta} H_{\theta}\right)u_2$ where $u_1=H_{\theta}^{-1}v_{\theta, 1}$ and $u_2=H_{\theta}^{-1}v_{\theta, 2}$. PyTorch supports the gradient computation for functions of gradient, making $v_{\theta, 1}$ and $v_{\theta, 2}$ backward-friendly. PyTorch also calculates gradients for functions of hessian vector products \textit{implicitly}, which leads to efficiency. Additionally, we can precompute $u_1$, $u_2$ and freeze them.

As a result, our final backward-friendly objective function is efficient and backward-friendly with PyTorch and is given as, $\bar{\ell}_{\text{attack}}(\theta)=v_{\theta, 1}^{\top}u_2 + u_1^\top v_{\theta, 2} - u_1^{\top}H_{\theta}u_2$. The algorithm for computing our backward-friendly objective $\bar{\ell}_{\text{attack}}$ is elucidated in Alg. \ref{alg:backonly}.

\textbf{Derivation for expanding the linearized objective:}

\begin{align*}
\lhattfull &= u^\top (\mathcal{I}_{\theta}(z, \Ztt): z\in{Z}) \\
&= 	\sum_{z\in Z}u_z\cdot \mathcal{I}_{\theta}(z, \Ztt)\\
&= \sum_{z\in Z}u_z\cdot \sum_{i=1}^{m} \mathcal{I}_{\theta}(\zti, z)\\
&= \sum_{z\in Z}u_z\cdot \sum_{i=1}^{m} -\nabla_\theta L\left(z_{\rm {test }, i}, \theta\right)^{\top} H_{\theta}^{-1} \nabla_\theta L(z, \theta)\\
&= \left(-\nabla_\theta \sum_{i=1}^{m} L\left(z_{\rm {test }, i}, \theta\right)\right)^{\top} H_{\theta}^{-1} \left(\nabla_\theta \sum_{z\in Z}u_z\cdot  L(z, \theta)\right)\\
&= v_{\theta, 1}^{\top} H_{\theta}^{-1}v_{\theta, 2}
\end{align*}

where $v_{\theta, 1}= \left(-\nabla_\theta \sum_{i=1}^{m} L\left(z_{\rm {test }, i}, \theta\right)\right)$ and $v_{\theta, 2}= \left(\nabla_\theta \sum_{z\in Z}u_z\cdot  L(z, \theta)\right)$.\\

\textbf{Chain Rule for gradient of expanded linearized objective:} 
\begin{align*}
\nabla_{\theta}\lhattfull &=\nabla_{\theta}v_{\theta, 1}^{\top} H_{\theta}^{-1}v_{\theta, 2}\\
&= \left(\nabla_{\theta}v_{\theta, 1}\right)^{\top} \cdot H_{\theta}^{-1}v_{\theta, 2} + v_{\theta, 1}^{\top} H_{\theta}^{-1}\left(\nabla_{\theta}v_{\theta, 2}\right) - v_{\theta, 1}^{\top}H_{\theta}^{-1} \left(\nabla_{\theta}H_{\theta}\right) H_{\theta}^{-1}v_{\theta, 2}\\
&= \left( \nabla_{\theta} v_{\theta, 1}\right)^{\top}u_2 + u_1^\top \left( \nabla_{\theta} v_{\theta, 2}\right) - u_1^{\top}\left( \nabla_{\theta} H_{\theta}\right)u_2
\end{align*}
where $u_1=H_{\theta}^{-1}v_{\theta, 1}$ and $u_2=H_{\theta}^{-1}v_{\theta, 2}$.

\begin{algorithm}[tbh]
    \caption{\textsf{Get$\_$Backward$\_$Friendly$\_$Attack$\_$Objective}}\label{alg:backonly}
\textbf{Input:} Model Parameters $\theta$, Train Set $Z$, Test Set $Z_{\rm {test }}$, Loss $L$, Original Attack Objective $\ell_{\rm {attack}}$\\
\textbf{Output:} Backward-Friendly Attack Objective $\bar{\ell}_{\rm {attack}}(\theta)$
\begin{algorithmic}[1]
    \State Compute $(\mathcal{I}_{\theta}(z, \Ztt): z\in{Z})$ from Appendix Alg.~\ref{alg:frwdonly}
    \State Compute $u:= \left(\frac{\partial \ell_{\rm {attack}}}{\partial \mathcal{I}_{\theta}(z, \Ztt)}:z\in Z\right)$
    \State Compute $v_{\theta, 1}:= \left(-\nabla_\theta \sum_{i=1}^{m} L\left(\zti, \theta\right)\right)$ and $v_{\theta, 2}:= \left(\nabla_\theta \sum_{z\in Z}u_z\cdot  L(z, \theta)\right)$ 
    \State Compute and freeze $u_1:=H_{\theta}^{-1}v_{\theta, 1}$ and $u_2:=H_{\theta}^{-1}v_{\theta, 2}$
    \State Compute $\bar{\ell}_{\rm {attack}}(\theta):=v_{\theta, 1}^{\top}u_1 + u_2^\top v_{\theta, 2} - u_1^{\top}H_{\theta}u_2$
    \State \textbf{Return:} $\bar{\ell}_{\rm{attack}}(\theta)$
\end{algorithmic}
\end{algorithm}

\begin{algorithm}[tbh]
    \caption{\textsf{ForwardOnlyInf}}\label{alg:frwdonly}
\textbf{Input:} Parameters $\theta$, train set $Z$, test set $Z_{\rm {test }}$, loss $L$\\
\textbf{Output:}$(\mathcal{I}_{\theta}(z, \Ztt): z\in{Z})$
\begin{algorithmic}[1]
    \State Compute $L\left(Z_{\rm {test }}, \theta\right):= \nabla_\theta \sum_{i=1}^m L\left(\zti, \theta\right)$
    \State Compute $s_{\rm{test}}:=H_{\theta}^{-1}L\left(Z_{\rm {test }}, \theta\right)$ by the hessian-inverse-vector product in~\cite{koh2017understanding}.
    \State $\forall z\in Z$, compute $\mathcal{I}_{\theta}(z, \Ztt):=  s_{\rm{test}}^{\top}\nabla_\theta L\left(z, \theta\right)$
    \State \textbf{Return:} $(\mathcal{I}_{\theta}(z, \Ztt): z\in{Z})$
\end{algorithmic}
\end{algorithm}

\begin{algorithm}[tbh!]
   \caption{\textsf{Gradient-based optimization for Attack Loss $\ell_{\rm{attack}}$}}\label{alg:overallfb}
   \textbf{Input:} Parameters $\thetast$, Radius $C$, train set $Z$, test set $Z_{\rm {test }}$, loss $L$, attack objective $\ell_{\rm{attack}}$, gradient-based optimizer $Opt$, distance function $\rm dist$\\
\textbf{Output:} $\thetada$
\begin{algorithmic}[1]
\State Set $\theta_0 :=$ Randomly chosen model parameters from a ball of radius $C$ centered around $\thetast$, $\mathcal{B}(\thetast, C)$ where distance is calculated using $\rm dist$
\For{$t = 1$ \textbf{to} $T$}
    \State $\bar{\ell}_{\rm{attack}}(\theta_{t-1}) =$ \textsf{Get$\_$Backward$\_$Friendly$\_$Attack$\_$Objective}($\theta_{t-1}$, $Z$, $Z_{\rm{test}}$, $L$, $\ell_{\rm{attack}}$)
    \State Compute $\nabla_{\theta }\bar{\ell}_{\rm{attack}}(\theta_{t-1})$ through $\bar{\ell}_{\rm{attack}}(\theta_{t-1})$.Backward() in PyTorch
    \State Update $\theta_{t-1}\rightarrow \theta_{t}$ by the given gradient-based optimizer $Opt$
    \State If $\theta_{t} \notin \mathcal{B}(\thetast, C)$, clip $\theta_{t}$ to lie within $\mathcal{B}(\thetast, C)$ where distance is calculated using $\rm dist$
\EndFor
\State \textbf{Return:} $\thetada := \theta_T$
\end{algorithmic}
\end{algorithm}

When using the original influence-based objective naively, it was not possible to even do one backward pass due to memory constraints. This algorithm brings down the memory required for one forward $+$ backward pass from not being feasible to run on a 12GB GPU to 7GB for a 206K parameter model and from 8GB to 1.7GB for a 5K model.

\subsubsection{Experimental details}
\label{app:sec:datavalexp}

\textbf{Dataset Details.} CIFAR10~\citep{krizhevsky2009learning} has a training/test set size of 50000/10000 with 10 output classes, Oxford-IIIT Pet~\citep{parkhi2012cats} has a training/test set size of 3680/3669 with 37 output classes and Caltech-101~\citep{li_andreeto_ranzato_perona_2022} has a training/test set size of 6941/1736 with 101 output classes.

\textbf{Forward pass Details.} We use the LiSSA implementation given at \url{https://github.com/nimarb/pytorch_influence_functions} with recursion depth=1e6, scale=25 and damping factor= 0.01.

\textbf{Attack Details.} To optimize our attack objective, we use algorithm Alg. \ref{alg:overallfb} for computing gradients with Adam as the optimizer \cite{kingma2014adam}. We use two learning rates $\{0.01, 0.1\}$ and 100 steps of updates. We optimize every attack from 5 different initializations. We run our attacks with multiple values of the constraint radius $C = \{0.05,0.1,0.2,0.5\}$. For each regime, the reported number is the highest we could obtain with different values of constants $C$ or $\alpha$.

\textbf{Baseline Details.} For training a model under the baseline attack we choose Adam as the optimizer, set the batch size as $256$ and update for $1400$ steps. We ran the baseline for different weights $\alpha$ with a logarithmic scaling from $10$ to $1\rm{e}18$. We use weighted sampling instead of uniform to account for the weight on the target sample.

\subsubsection{Proof of Impossibility Theorem (Theorem~\ref{thm:impossibility})}
\label{app:proofimpossibility}
Proof of Theorem~\ref{thm:impossibility}]:\\
We first introduce a construction of 2-class classification dataset $Z_{\rm train}, Z_{\rm test}\subseteq \mathbb{R}^{d}\times \{1, -1\}$ and show under this construction and a logistic regression model $\theta$,
there exists a target data $z_{\rm target}\in Z_{\rm train}$ such that no matter how we manipulate this linear model $\theta$, i.e. $\forall \theta\in \mathbb{R}^{d}$, the rank of $\mathcal{I}_{\theta}(z_{\rm target}, \Ztt)$ among $\left(\mathcal{I}_{\theta}(z, \Ztt):z\in Z_{\rm train}\right)$ \emph{cannot} reach top-$1$.

Denote $z_i\in \mathbb{R}^d$ as a one-hot vector which has all zeros except $1$ at the $i$th dimension. We construct $Z_{\rm test}$ as $\{(\mathbf{e}_1, 1)\}$, and construct $Z_{\rm train}=\{(z_i, y_i)|i=2, \cdots, d\}\cup\{(\mathbf{e}_1, 1), (-\mathbf{e}_1, 1)\}$ where $y_i$ can be arbitrarily selected from $\{1, -1\}$. By choosing $z_{\rm target}=(\mathbf{e}_1, 1)\in Z_{\rm train}$ and $z_{\rm bar}=(-\mathbf{e}_1, 1)\in Z_{\rm train}$,  next we are going to prove $\mathcal{I}_{\theta}(z_{\rm target}, \Ztt)<\mathcal{I}_{\theta}(z_{\rm bar}, \Ztt)$ for any $\theta\in \mathbb{R}^{d}$, which indicates $\mathcal{I}_{\theta}(z_{\rm target}, \Ztt)$ among $\left(\mathcal{I}_{\theta}(z_{\rm target}, \Ztt):z\in Z_{\rm train}\right)$ \emph{cannot} reach top-$1$. 

We adopt the computation of influence function on logistic regression from the original influence function paper~\citep{koh2017understanding}.
Denote $r(z_i, \theta)=\sigma(z_i^{\top}\theta)\cdot\sigma(-z_i^{\top}\theta)$ where $\sigma(t)=\frac{1}{1+\exp(-t)}>0$. Then the Hessian $H_\theta$ of the  logistic regression training loss $\frac{1}{d+1}\sum_{z\in Z_{\rm train}}L(z, \theta)$ is $H_\theta=\frac{1}{d+1}\left(2r(\mathbf{e}_1, \theta)\mathbf{e}_1\mathbf{e}_1^\top + \sum_{i=2}^dr(z_i, \theta)z_iz_i^\top\right).$ Then we can calculate the influence function for $z_{\rm target}$ and $z_{\rm bar}$:

\begin{align*}
	\mathcal{I}_{\theta}(z_{\rm target}, \Ztt) &= -\sigma(-\theta^\top\mathbf{e}_1)\cdot \sigma(\theta^\top\mathbf{e}_1)\cdot \mathbf{e}_1^\top H_\theta^{-1}\mathbf{e}_1  = -\sigma(-\theta^\top\mathbf{e}_1)\cdot \sigma(\theta^\top\mathbf{e}_1) \cdot \frac{d+1}{r(\mathbf{e}_1, \theta)}<0,
\end{align*}
\begin{align*}
	\mathcal{I}_{\theta}(z_{\rm bar}, \Ztt) &= -\sigma(-\theta^\top\mathbf{e}_1)\cdot \sigma(\theta^\top\mathbf{e}_1) \mathbf{e}_1^\top H_\theta^{-1}(-\mathbf{e}_1) = \sigma(-\theta^\top\mathbf{e}_1)\cdot \sigma(\theta^\top\mathbf{e}_1) \cdot \frac{d+1}{r(\mathbf{e}_1, \theta)}>0.
\end{align*}
This complete the proof: $\forall \theta$, $\mathcal{I}_{\theta}(z_{\rm target}, \Ztt)<0<\mathcal{I}_{\theta}(z_{\rm bar}, \Ztt)$ and therefore $z_{\rm target}$ can never achieve top-1.

The above construction can be generalized to top-$K$ for any $K>1$: we can construct  $Z_{\rm test}$ as $\{(\mathbf{e}_1, 1)\}$, and construct $Z_{\rm train}=\{(z_i, y_i)|i=2, \cdots, d\}\cup\{(\mathbf{e}_1, 1), K\text{ duplicated }(-\mathbf{e}_1, 1)\}$ where $y_i$ can be arbitrarily selected from $\{1, -1\}$. Then similar to the proof above, by choosing $z_{\rm target}=(\mathbf{e}_1, 1)\in Z_{\rm train}$ and $K$ duplicated training point in the training set $z_{\rm bar}^{k}=(-\mathbf{e}_1, 1), k\in[K]$, we can prove $\forall \theta$, $\mathcal{I}_{\theta}(z_{\rm target}, \Ztt)<0$ and $\forall\theta,k\in[K]$, $\mathcal{I}_{\theta}(z_{\rm bar}^k, \Ztt)>0$. Consequently, $\forall \theta$, $z_{\rm target}$ will not achieve top-$K$.

\begin{figure}[htbp!]
    \centering
    
    \subfigure{\includegraphics[width=0.32\textwidth]{plots/data_valuation_attack/result_figs/hardness_easy_pet.pdf}}
    \subfigure{\includegraphics[width=0.32\textwidth]{plots/data_valuation_attack/result_figs/hardness_hard_pet.pdf}}
    \subfigure{\includegraphics[width=0.32\textwidth]{plots/data_valuation_attack/result_figs/hardness_rank_vs_norm_pet.pdf}}
    \subfigure{\includegraphics[width=0.32\textwidth]{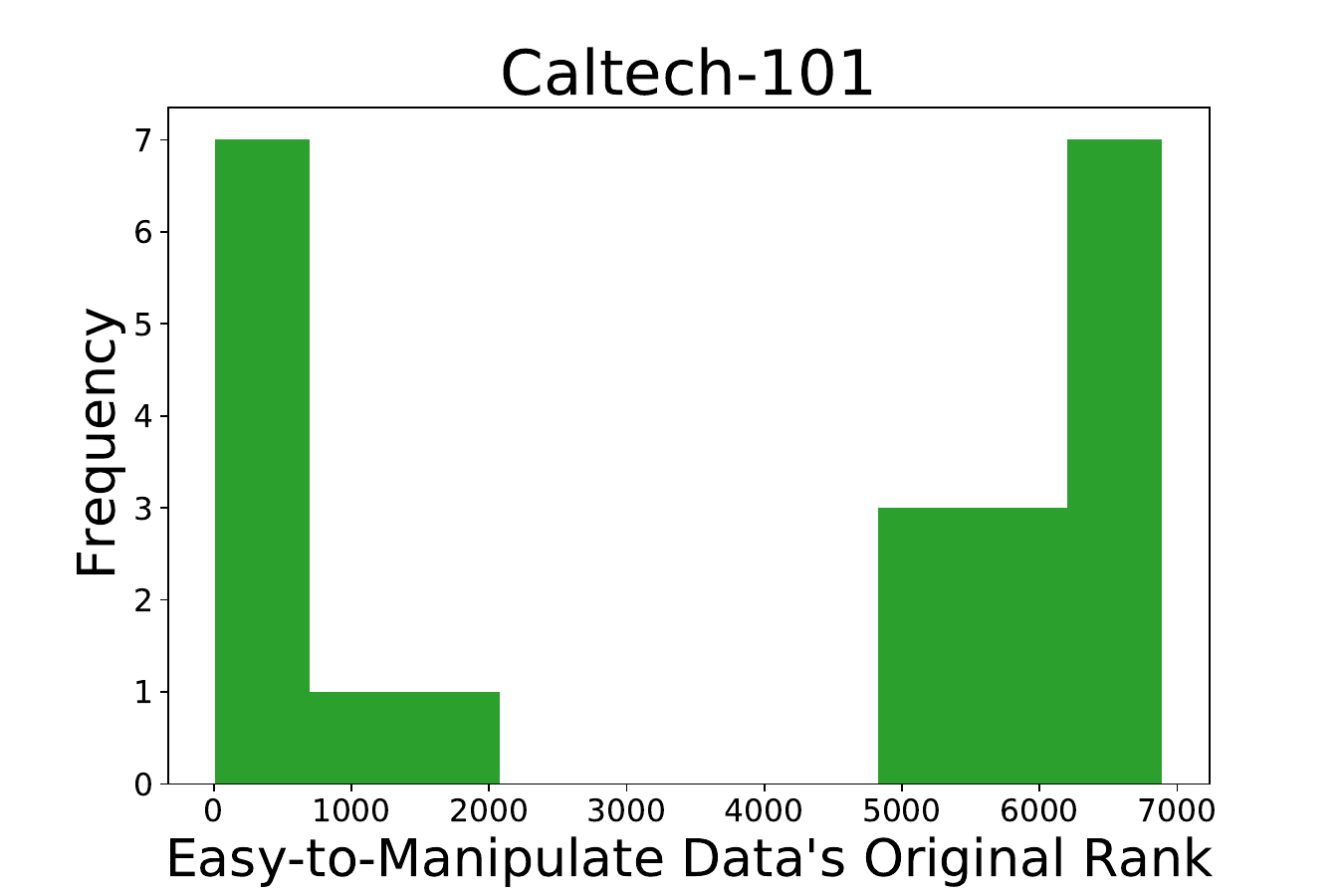}}
    \subfigure{\includegraphics[width=0.32\textwidth]{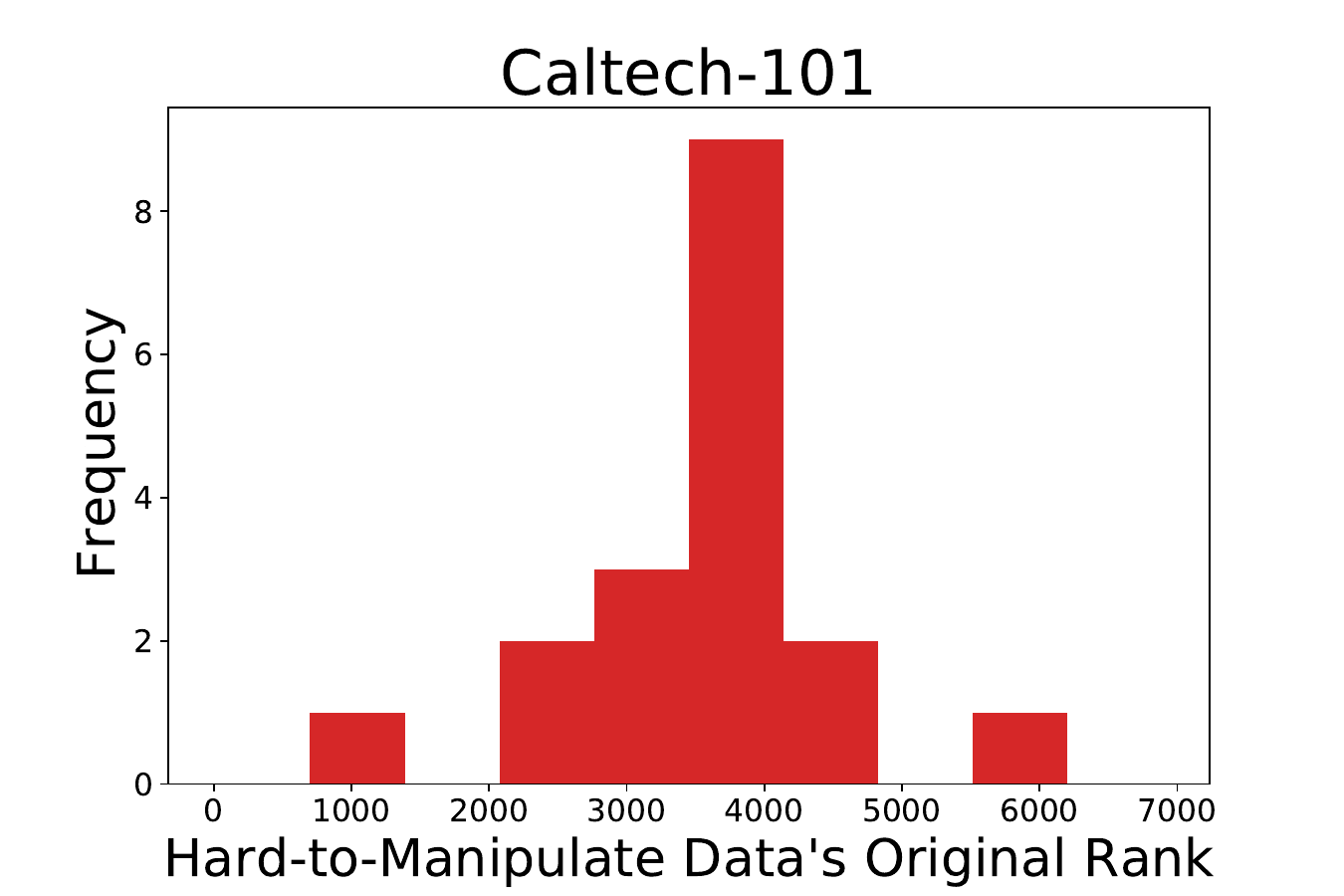}}
    \subfigure{\includegraphics[width=0.32\textwidth]{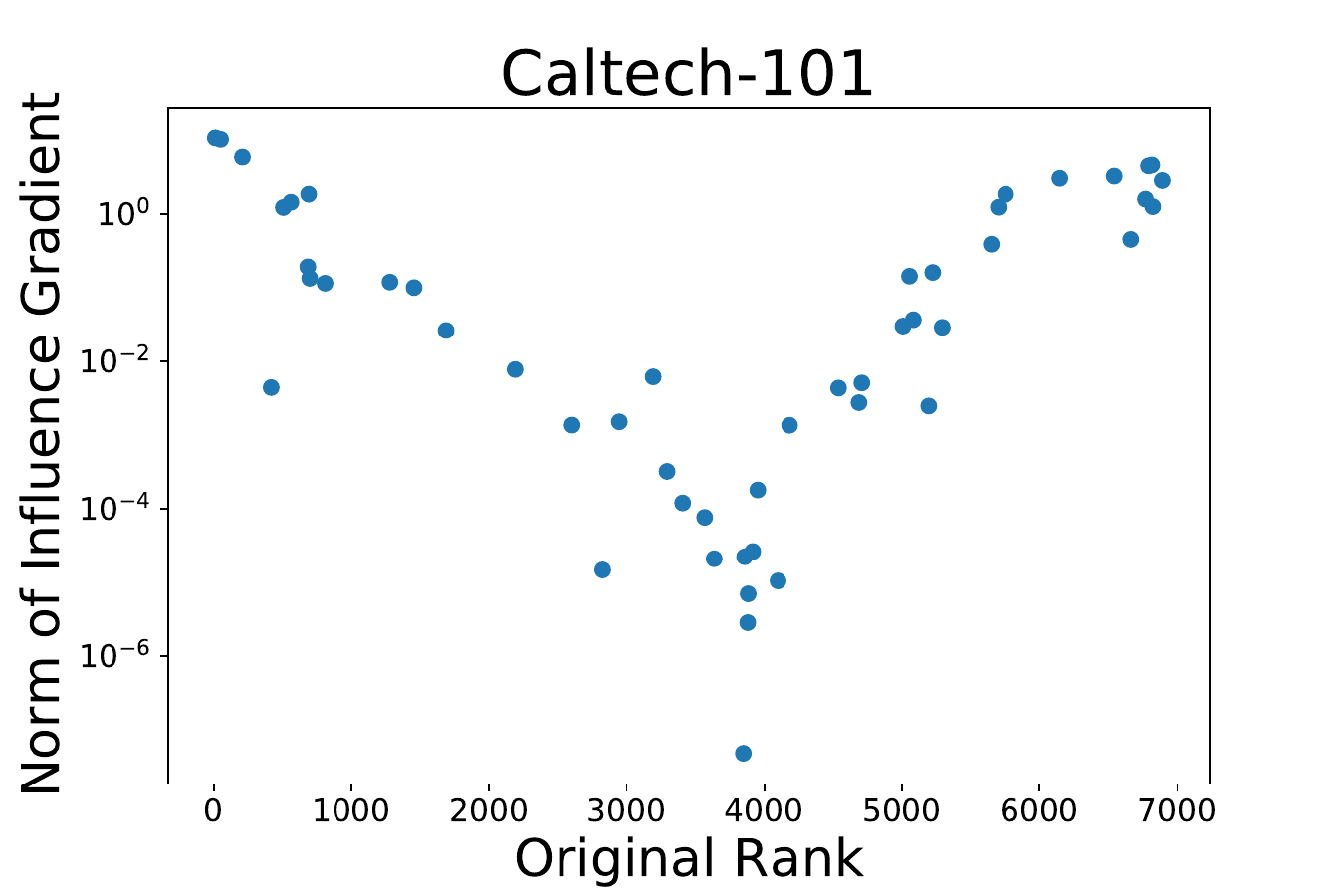}}

    \subfigure{\includegraphics[width=0.32\textwidth]{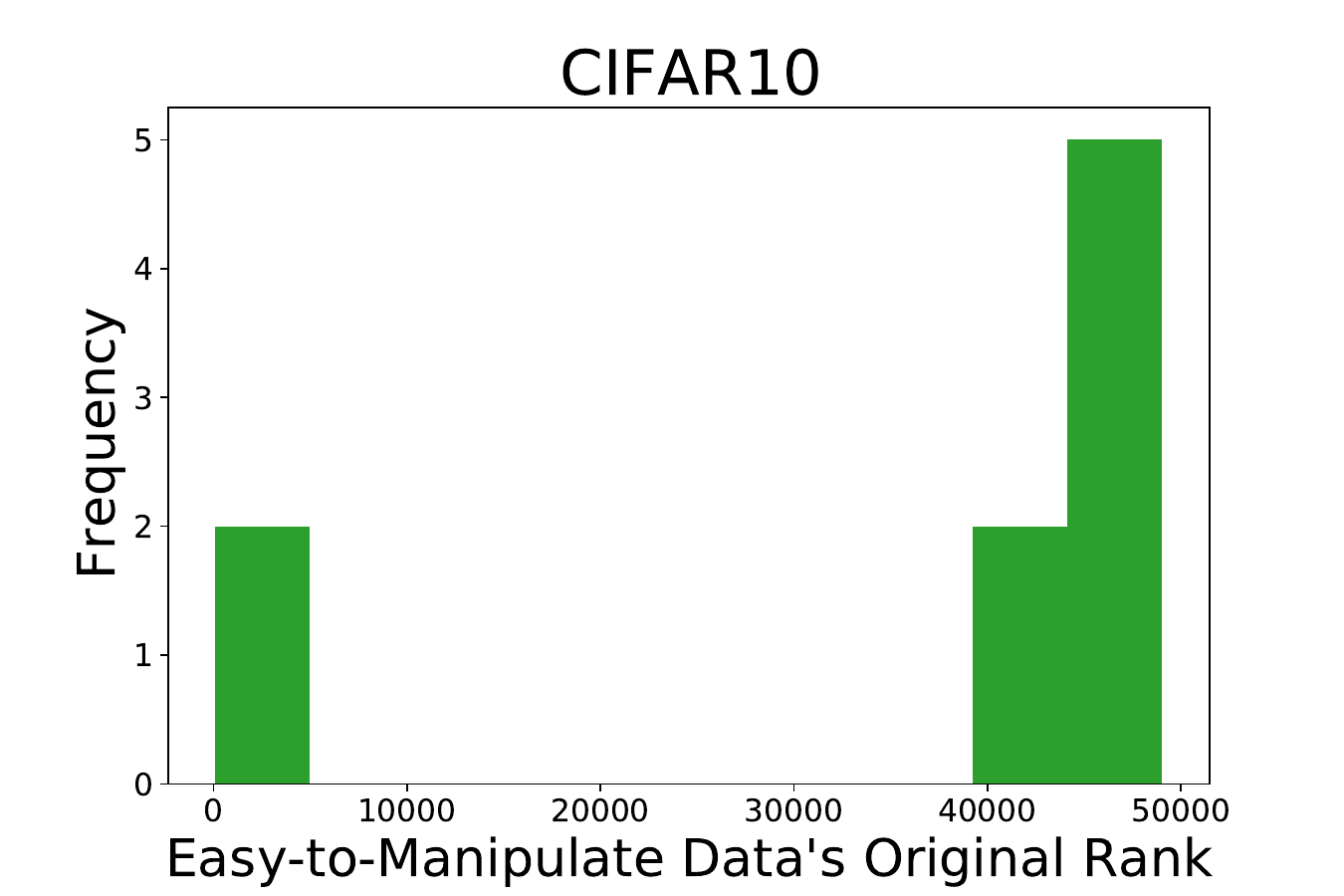}}
    \subfigure{\includegraphics[width=0.32\textwidth]{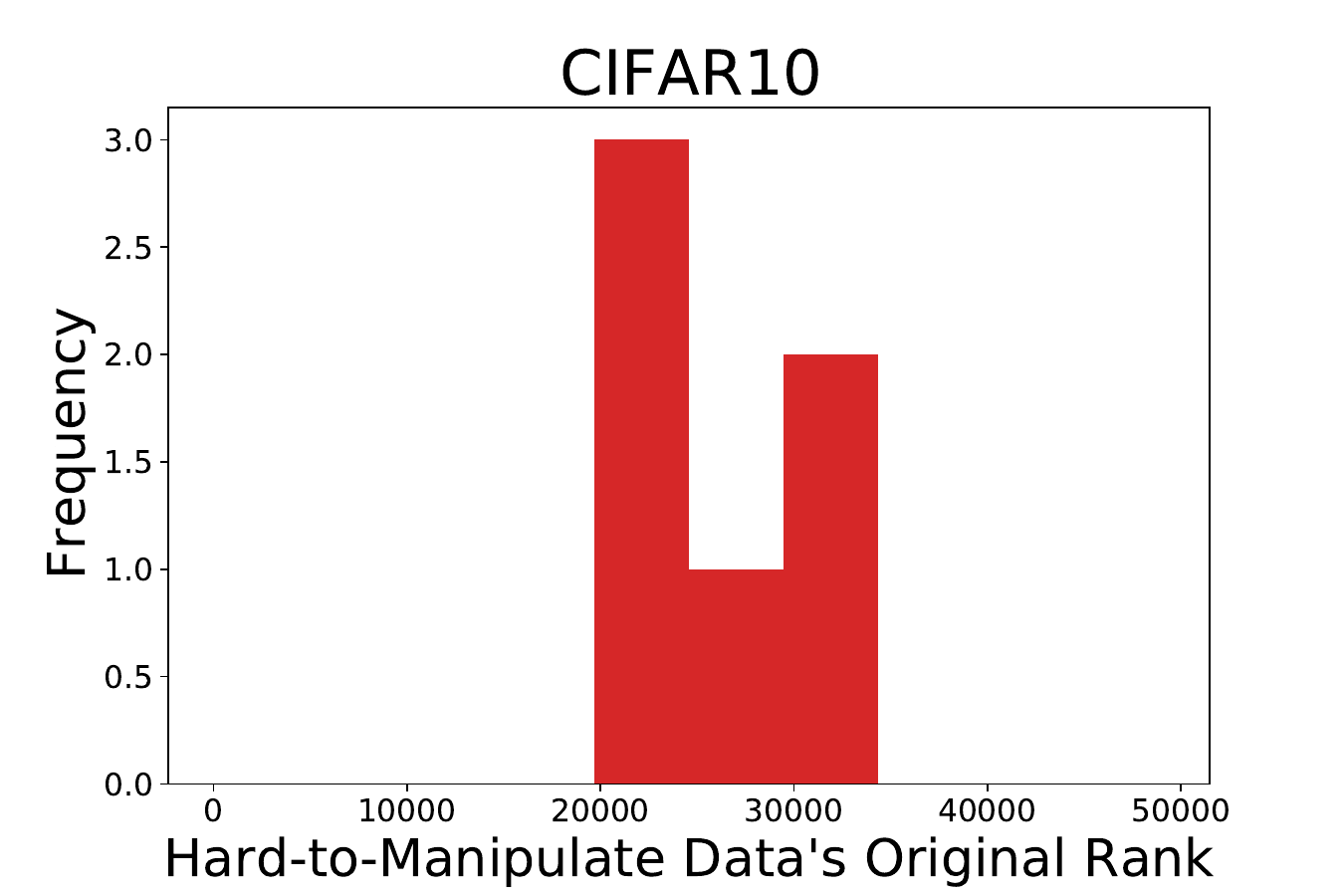}}
    \subfigure{\includegraphics[width=0.32\textwidth]{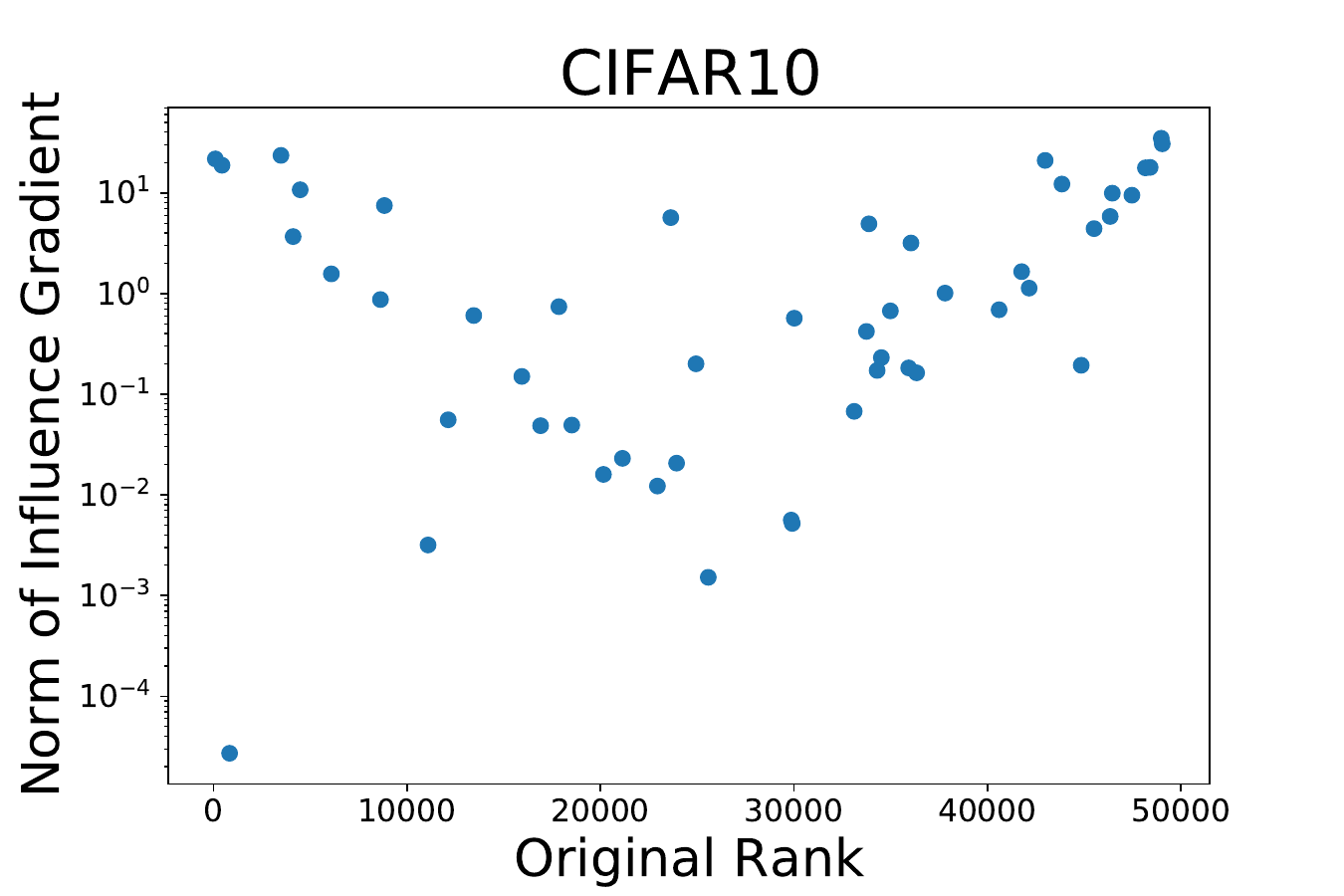}}

        \caption{Histograms for original ranks of easy-to-manipulate samples,  Histograms for original ranks of hard-to-manipulate samples, Scatterplots for influence gradient norm vs. original ranks of the 50 random target samples. Ranking $k:=1$. \textit{Easy-to-manipulate samples have extreme original influence ranks (large positive or negative) as the samples with the extreme rankings also have higher influence gradient norms, where the gradient is taken w.r.t. model parameters.}}
        \label{app:fig:easyvshard}
\end{figure}

\subsection{Fairness Manipulation Attack Details}
\label{app:sec:fairmani}

\begin{figure}[htbp!]

    \centering\includegraphics[width=\linewidth]{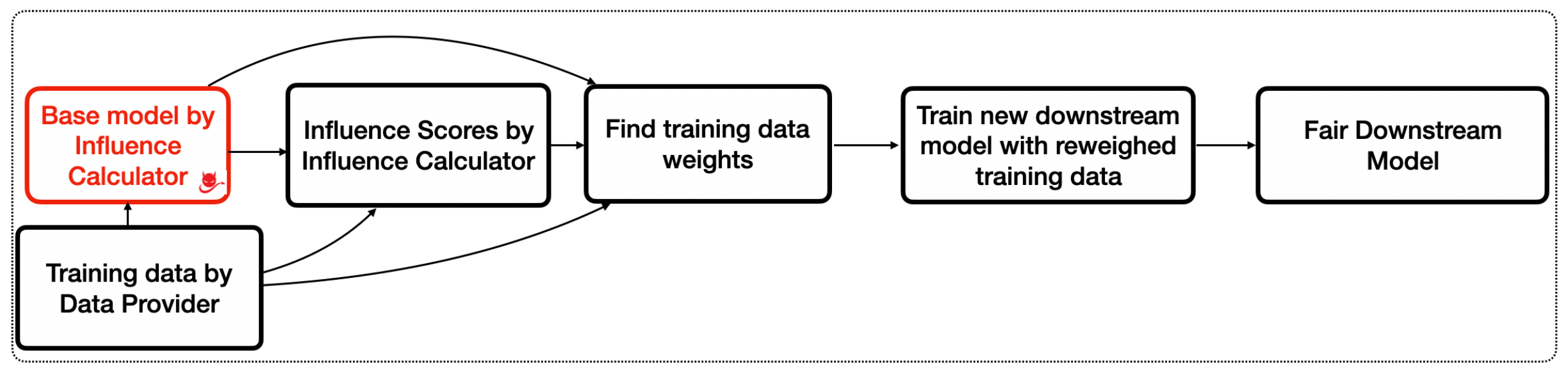}
    
        \caption{Overview of the process to achieve a fair model, as proposed by \cite{li2022achieving}. Our adversary, the model trainer manipulates the base model used to calculate influence scores.}
        \label{fig:fairness_process}
\end{figure}

\textbf{Optimization Problem for Reweighing Training Data as proposed by \citet{li2022achieving}} is given as follows,

\begin{equation}
\begin{array}{ll}
\operatorname{minimize} & \sum_i w_i \\
\text { subject to } & \sum_i w_i \mathcal{I}_{\text {fair }}\left(z_i\right)=-f_{\text {fair }}^{\mathcal{V}} \\
& \sum_i w_i \mathcal{I}_{\text {util }}\left(z_i\right) \leq 0 \\
& w_i \in[0,1]
\end{array}
\end{equation}

where $w_i$ refers to the weight of the $i^{th}$ training sample, $z_i$ refers to the $i^{th}$ training sample, $\mathcal{I}_{\text {util }}$ refers to our influence function, $\mathcal{I}_{\text {fair }}$ refers to some fairness influence function and $f_{\text {fair }}^{\mathcal{V}}$ corresponds to a differentiable fairness metric. In our threat model, the adversary manipulates the base model, which changes the influence scores $\mathcal{I}_{\text {util }}$.

An advanced version of the above optimization problem using additional parameters $(\beta, \gamma)$ which lead to various tradeoffs is given as,

\begin{equation}
\begin{array}{cl}
\operatorname{minimize} & \sum_i w_i \\
\text { subject to } & \sum_i w_i \mathcal{I}_{\text {fair }}\left(z_i\right) \leq-(1-\beta) \ell_{\text {fair }}^\nu, \\
& \sum_i w_i \mathcal{I}_{\text {util }}\left(z_i\right) \leq \gamma\left(\min _{\mathbf{v}} \sum_i v_i \mathcal{I}_{\text {util }}\left(z_i\right)\right), \\
& w_i \in[0,1] .
\end{array}
\end{equation}

\textbf{Fairness Metric.} We define the fairness metric used in our paper, Demographic Parity.

\begin{defn}[Demographic Parity Gap (DP)
\cite{dwork2012fairness}]\label{def:dp}
Given a data distribution $\mathcal{D}$ over $\bar{\mathcal{X}} \times \{0,1\}$ from which features $x^{\setminus a}$ and sensitive attribute $x^a \in \{0,1\}$ are jointly drawn from, Demographic Parity gap for a model $f_\theta$ is defined to be the difference in the rate of positive predictions between the two groups, $|\operatorname{Pr}(\hat{y} \mid x^a=0) - \operatorname{Pr}(\hat{y} \mid x^a=1)|$ where $\hat{y}$ is the prediction $f_\theta(x^{\setminus a}, x^a) $.
\end{defn}

\begin{table}[htbp!]
\centering
\scalebox{0.85}{
\begin{tabular}{lccccc}
\toprule
 Dataset & Predict &Train / Val. / Test Split & \#Dim. & Sensitive Attribute & Group Pos. Rate\\
\hline Adult & if annual income $>=50$k & $22622 / 7540 / 15060$ & 102 & Gender - Male / Female & $0.312 / 0.113$ \\

Compas & if defendant rearrested in 2 yrs. & $3700 / 1234 / 1233$ & 433 & Race - White / Non-white & $0.609 / 0.518$ \\

German Credit & good/bad credit risk &$600 / 200 / 200$ & 56 & Age - 30 & $0.742 / 0.643$ \\
\bottomrule
\end{tabular}
}
\caption{Details for datasets used in our Fairness Manipulation attack experiments. In our setting, the Validation set is the test set shared between the model trainer and influence calculator. The Test set is an untouched set which is used to assess the performance and fairness of the final model achieved after the training with the reweighed training set. All our results are reported on this untouched Test set. Table details borrowed from \cite{li2022achieving}.}
\label{tab:fairdatadetails}
\end{table}

\begin{table}[htbp!]

\centering

\begin{tabular}{lcc}
\toprule
Dataset & Model $\ell_2$ reg. & DP $(\beta, \gamma)$ \\
\hline
 Adult &   2.26 & (0.8,0.3)\\
Compas &   37.00 & (0.3,0.1)\\
German Credit & 5.85 & (0.5,0.0)\\
\bottomrule
\end{tabular}
\caption{Details for the Fairness Manipulation attack experiments. We use the same values of parameters as used by \cite{li2022achieving}.}
\label{tab:fairattackparams}
\end{table}

\end{document}